\numberwithin{equation}{section}
\numberwithin{figure}{section}
\numberwithin{table}{section}
\def\algbackskip{\hskip-\ALG@thistlm}
\definecolor{lightgray}{gray}{0.9}
\definecolor{bluegreen}{rgb}{0.0, 0.87, 0.87}
\newtheorem{remark}{Remark}
\newtheorem{definition}{Definition}
\newtheorem{theorem}{Theorem}
\newcommand*{\bl}[1]{\mathbf{#1}}
\newcommand*{\bldt}[1]{\mathbf{\dot{#1}}}
\newcommand*{\blt}[1]{\mathbf{\tilde{#1}}}
\newcommand*{\bltdt}[1]{\mathbf{\dot{\tilde{#1}}}}
\newcommand*{\blh}[1]{\mathbf{\hat{#1}}}
\newcommand*{\blhdt}[1]{\mathbf{\dot{\hat{#1}}}}
\DeclareMathOperator{\sech}{sech}
\begin{document}
  
%%%%%%%%%%%%%%%%%%%%%%%%%%%%%%%%%%%%%%%%%%%%%%%%%%%%%%%%%%%%%%%%%%%%%%%%%%%%%%%%
% PAPER INFORMATION.                                                           %
%%%%%%%%%%%%%%%%%%%%%%%%%%%%%%%%%%%%%%%%%%%%%%%%%%%%%%%%%%%%%%%%%%%%%%%%%%%%%%%%

\title{Structure-preserving learning for multi-symplectic PDEs}

\author[$\ast$]{Süleyman Y\i ld\i z}
\affil[$\ast$]{Max Planck Institute for Dynamics of Complex Technical Systems, 39106 Magdeburg, Germany.\authorcr
	\email{yildiz@mpi-magdeburg.mpg.de}, \orcid{0000-0001-7904-605X}
}
  
\author[$\ast\ast$]{Pawan Goyal}
\affil[$\ast\ast$]{Max Planck Institute for Dynamics of Complex Technical Systems, 39106 Magdeburg, Germany.\authorcr
  \email{goyalp@mpi-magdeburg.mpg.de}, \orcid{0000-0003-3072-7780}
}
  
\author[$\dagger\ddagger$]{Peter Benner}
\affil[$\dagger$]{Max Planck Institute for Dynamics of Complex Technical Systems, 39106 Magdeburg, Germany.\authorcr
  \email{benner@mpi-magdeburg.mpg.de}, \orcid{0000-0003-3362-4103}
}
\affil[$\ddagger$]{Otto von Guericke University,  Universit\"atsplatz 2, 39106 Magdeburg, Germany\authorcr
  \email{peter.benner@ovgu.de} 
  \vspace{-0.5cm}
}
  
\shorttitle{Structure-preserving learning for multi-symplectic PDEs}
\shortauthor{S. Y\i ld\i z, P. Goyal, P. Benner}
\shortdate{}
  
\keywords{Energy preserving integrator, multi-symplectic PDEs, structure-preserving methods, reduced-order modeling, large-scale models}

%\msc{MSC1, MSC2, MSC3}
  
\abstract{%
This paper presents an energy-preserving machine learning method for inferring reduced-order models (ROMs) by exploiting the multi-symplectic form of partial differential equations (PDEs). The vast majority of energy-preserving reduced-order methods use symplectic Galerkin projection to construct reduced-order Hamiltonian models by projecting the full models onto a symplectic subspace. However, symplectic projection requires the existence of fully discrete operators, and in many cases, such as black-box PDE solvers, these operators are inaccessible. In this work, we propose an energy-preserving machine learning method that can infer the dynamics of the given PDE using data only, so that the proposed framework does not depend on the fully discrete operators. In this context, the proposed method is non-intrusive. The proposed method is grey box in the sense that it requires only some basic knowledge of the multi-symplectic model at the partial differential equation level. We prove that the proposed method satisfy spatially discrete local energy conservation and preserves the multi-symplectic conservation laws.  We test our method on the linear wave equation, the Korteweg-de Vries equation and the Zakharov-Kuznetsov equation. We test the generalization of our learned models by testing them far outside the training time interval.
}

\novelty{
	\begin{enumerate}
		\item Model order reduction for the multi-symplectic PDEs is investigated.
		\item Non-intrusive model order reduction method for large-scale multi-symplectic PDEs is proposed.
		\item Several numerical examples tested to validate the analysis.
\end{enumerate}
} 
\maketitle

%%%%%%%%%%%%%%%%%%%%%%%%%%%%%%%%%%%%%%%%%%%%%%%%%%%%%%%%%%%%%%%%%%%%%%%%%%%%%%%%
% PAPER CONTENT.                                                               %
%%%%%%%%%%%%%%%%%%%%%%%%%%%%%%%%%%%%%%%%%%%%%%%%%%%%%%%%%%%%%%%%%%%%%%%%%%%%%%%%
\section{Introduction}\label{sec:intro}

Partial differential equations (PDEs) are the main mathematical tool for describing many complex physical phenomena such as weather dynamics, chemical reaction dynamics, turbulent fluid flows, astrophysical plasmas and molecular dynamics. Numerical methods play an important role in solving PDEs, but the accuracy of their numerical solutions is highly dependent on the fine spatial and temporal discretization of the governing models, which makes the numerical solution approach cumbersome due to time-consuming simulation times and memory issues arising from large models. In many scenarios such as real-time simulation, structural design optimization, control and uncertainty quantification, fast simulation of these large models is required, which is often infeasible for high fidelity models. In this context, model order reduction (MOR) provides an efficient solution to this problem by constructing low-dimensional reduced-order models (ROMs). Over several decades, MOR techniques have been developed for various problems in science and engineering. A comprehensive overview of MOR techniques and their applications can be found in ~\cite{morBenGQetal21,morBenGQetal21a,morBenGQetal21b}.

Proper orthogonal decomposition (POD)~\cite{berkooz93} plays an important role in the many projection-based MOR approaches available and has been shown to be efficient and accurate in various applications \cite{morBenGQetal21b,Sirovich87,lumley67, Holmes12}. However, when dealing with particular systems such as Hamiltonian or multi-Hamiltonian systems, a direct application of projection-based MOR techniques falls short as it does not take into account the structure of the system, so that important properties of these systems, such as the preservation of physical laws, are lost. To address these issues, several structure-preserving MOR approaches have been developed. One such method is the symplectic MOR method proposed in~\cite{peng2016}, where the symplectic structure of the Hamiltonian equations is preserved by introducing the proper symplectic decomposition (PSD) method with Galerkin projection. The PSD is extended to non-orthonormal bases in~\cite{buchfink2019symplectic}. PSD ensures that the constructed reduced-order model is a Hamiltonian system, but it is restricted to canonical Hamiltonian systems. An extension of proper symplectic decomposition to non-canonical systems can be found in~\cite{miyatake2019structure}. The non-canonical extensions of the PSD method are limited to energy conservation. On the other hand, some recent studies have focused on global energy conservation using the multi-symplectic structure~\cite{yildiz23,uzunca2023global} and have shown that they preserve the global energy of the system. For a broader overview on structure-preserving model order reduction we refer to the review \cite{hesthaven2021structure}.

The structure-preserving ROMs mentioned above are intrusive and this implies that they require the availability of discrete differential operators. Access to these operators is very difficult in many scenarios, so non-intrusive MOR methods have recently become more attractive as they do not require the discrete operators. An important non-intrusive MOR strategy is called dynamic mode decomposition (DMD) \cite{schmid2010dynamic}, which learns linear systems from time-domain data by applying the technique associated with Koopman operator approximation. Another popular non-intrusive approach is the operator inference (OpInf) framework. The OpInf framework provides a way to learn reduced-dimensional representations of nonlinear problems. First, it is introduced for PDEs with low order polynomial nonlinear terms \cite{peherstorfer2016data}. In the following studies, the operator inference method is extended to general nonlinear systems by exploiting the lifting transformations in \cite{swischuk2020learning,qian2020lift}. The OpInf method is extended to nonlinear systems with known analytical expressions in \cite{benner2020operator}. More recently, the operator inference approach has been successfully adapted to Hamiltonian systems \cite{sharma2022hamiltonian}. The non-intrusive methods mentioned so far have mainly focused on learning the reduced-order operators. Some other studies have focused on modeling the Hamiltonian functions with neural networks and then using the Hamiltonian equations to learn the underlying systems \cite{chen2019symplectic,greydanus2019hamiltonian,finzi2020simplifying,offen2022symplectic}. Although these methods are quite successful in learning Hamiltonian systems, they focus on very low-dimensional systems, i.e. 3-4 dimensions. The idea of Hamiltonian neural networks and lifting transformations for low and high dimensional Hamiltonian systems has been successfully studied in \cite{yildizquad24}, the study shows that Hamiltonian systems can be transformed into simpler Hamiltonian systems such as cubic Hamiltonian systems, and their stability is investigated in \cite{goyal2023deep}.

In this work we are interested in constructing a structure-preserving MOR method inspired by the studies \cite{sharma2022hamiltonian,yildiz23}. Specifically, our goal is to obtain a non-intrusive MOR method suitable for multi-symplectic PDEs. While preserving the structure of the full model, we aim that the learned ROM inherits the properties of the full model, such as energy conservation. We assume the availability of knowledge at the PDE level. The proposed method is suitable for dealing with high-dimensional data arising from multi-symplectic PDEs. First, we project the high-dimensional data onto a low-dimensional basis via the basis obtained by the technique explained in \cite{yildiz23}, and learn the reduced operators from the projected data using an optimization problem. The proposed method can be seen as an extension of the Hamiltonian operator inference method proposed in \cite{sharma2022hamiltonian}, in the sense that it also preserves the multi-symplectic conservation law. With suitable discretizations, the proposed method provides a way to preserve the local quantities of multi-symplectic PDEs, which makes it different from the frameworks leading to structure-preserving models for canonical Hamiltonian systems.

The manuscript is structured as follows. \Cref{sec:FOM} describes the semi-discretization of the full multi-symplectic model. \Cref{sec:ROM} presents the non-intrusive structure-preserving method and the proposed operator inference problem. \Cref{sec:num} demonstrates the validity of the proposed approach with numerical results. Concluding remarks are given in \Cref{sec:conc}.

\section{Multi-symplectic Full-order Model} \label{sec:FOM}
One-dimensional multi-symplectic PDEs can be written in the following form:
\begin{equation}\label{eqn:Ms_PDEs}
	K z_t+L z_x=\nabla_z S(z),	\quad (x,t)\in \mathbb{R} \times \mathbb{R},
\end{equation}
where $ z(x, t)=[z_1(x, t),\ldots, z_d(x, t)]^\top \in\mathbb{R}^{d} $ is the vector of state variables, $ S(z) : \mathbb{R}^{d} \rightarrow \mathbb{R}$ is a smooth function and  $ K,  L \in \mathbb{R}^{d\times d} $ are skew-symmetric matrices. Multi-symplectic PDEs \eqref{eqn:Ms_PDEs} give rise to very useful invariants \cite{moore2003se}, which are as follows:
\begin{itemize}
	\item the multi-symplectic conservation law:
	\begin{equation}\label{eqn:mscons}
	\partial_t \omega+\partial_x\kappa=0, \quad\omega=dz\wedge K_+dz,\quad\kappa=dz\wedge L_+dz,
	\end{equation}  
	\item  the local momentum conservation law:
	\begin{equation*}
	I_t +G_x=0, \quad G=S(z)+z_t^T K_+z,\quad I=-z_x^T K_+z,
	\end{equation*}
	
	\item the local energy conservation law:
	\begin{equation*}
		E_t +F_x=0, \quad E=S(z)+z_x^TL_+z,\quad F=-z_t^TL_+z, \quad \text{and}
	\end{equation*}      
\end{itemize}   
where $K_+$ and $L_+$ satisfy
$$K=K_+-K_+^\top ,   \quad    L=L_+-L_+^\top ,$$ 
and $ \wedge $ denotes the wedge product. The global energy conservation law under periodic boundary conditions is defined as follows:
\begin{equation}\label{eqn:GCL}
\varepsilon_t=0,\quad \varepsilon(t)=\int_{\Omega}E(z(t))dx,
\end{equation}
where $E(z(t))=S(z)+z_x^TL_+z$.
\begin{definition}
Numerical schemes preserving the discrete version of the conservation of symplecticity \eqref{eqn:mscons} are said to be multi-symplectic.
\end{definition}
Next, we show a structure-preserving discretization of the one-dimensional multi-symplectic PDE \eqref{eqn:Ms_PDEs}. Two-dimensional structure-preserving space discretizations can easily be obtained using a similar procedure, see \cite{uzunca2023global}. 
To discretize the governing equation \eqref{eqn:Ms_PDEs}, we first introduce some notation such as the spatial domain $ \Gamma=\left[a,b\right] $, temporal node $ t_n=n\Delta t $, spatial node $ x_j=a+h(j-1) $, $ j=1,\ldots,N $, $n=0,1,\ldots $, where $ h=\tfrac{b-a}{N} $ is the spatial step size and $ \Delta t $ is the temporal step size. We denote the discretization of the function $ v(x,t) $ at the node $ (x_j,t_n) $ as $ v_j^n $. In addition, we define the following mean and difference operators:
\begin{align*}
	\delta_{t} v_j^n &:= \frac{v_j^{n+1} -v_j^n}{\Delta t}, &\delta_{t}^{1/2} v_j^n &:= \frac{v_j^{n+1} -v_j^{n-1}}{2\Delta t}, & \mu_{t} v_j^n &:= \frac{v_j^{n+1} +v_j^n}{2}, \\
	\delta_{x} v_j^n &:= \frac{v_{j+1}^n -v_{j}^n}{\Delta x}, &  \delta_{x}^{1/2} v_j^n &:= \frac{v_{j+1}^n -v_{j-1}^n}{2\Delta x}, & \mu_{x}v_j^n &:= \frac{v_{j+1}^{n} +v_j^n}{2}.
\end{align*}
The difference and average operators all commute with each other \cite{eidnes2020}, i.e,
$$\delta_{t}^{1/2}\delta_{x}v_j^n=\delta_{x}\delta_{t}^{1/2}v_j^n,\quad \delta_{t}\mu_{x}v_j^n=\mu_{x}\delta_{t}v_j^n,\quad \mu_{t}\delta_{x}^{1/2}v_j^n=\delta_{x}^{1/2}\mu_{t}v_j^n.$$
In addition, they satisfy the discrete Leibniz rule \cite{eidnes2020},
$$\delta_{t}(uv)_j^n=(\varepsilon u_j^{n+1}+(1-\varepsilon )u_j^n)\delta_{t}v_j^n+\delta_{t}u_j^n((1-\varepsilon) v_j^{n+1}+ \varepsilon v_j^{n}),\quad 0\le\varepsilon\le 1.$$
We denote the discretized vector of spatially discrete state variables as
$$ \bl{z}(t)=[z_{1,1}(t),\ldots,z_{1,N}(t),z_{2,1}(t),\ldots,z_{2,N}(t),\ldots ,z_{d,1}(t),\ldots,z_{d,N}(t)]^\top ,$$
where $ z_{d,j}(t)= z_d(x_{j},t) $, for $ j=1,2,\ldots,N $. For analysis purposes we define the vector $  \bl{z}_{m}(t) $, containing the $m^{\texttt{th}}$ node of each state as follows:
\begin{equation}\label{eqn:element}
	\bl{z}_{m}(t)=[{z}_{1,m}(t),{z}_{2,m}(t),\ldots,{z}_{d,m}(t)]^\top .
\end{equation}
To obtain a spatial discretization of the multi-symplectic PDE \eqref{eqn:Ms_PDEs}, we approximate the partial derivative $\partial_x$ with the central difference operator $\delta_{x}^{1/2}$. After spatial discretization, the semi-discrete equations reads as: 
\begin{equation}\label{eqn:semi-discrete-element}
	K\partial_t \bl z_m+ L\delta_{x}^{1/2} \bl z_m =\nabla_z S(\bl z_m), \quad m=1,\ldots,N.
\end{equation}
The semi-discrete equations \eqref{eqn:semi-discrete-element} can be written compactly as follows:
\begin{equation}\label{eqn:Ms-semi-discrete}
	\bl K\bldt z+ \bl L\bl{D}_x\bl z=\nabla_{\bl z}\bl S(\bl z),
\end{equation}
where $ D_x $ is a skew-symmetric matrix,   $ \bl S(\bl z) : \mathbb{R}^{ N\cdot d} \rightarrow \mathbb{R} $, $ \bl{K}=(K \otimes I_N)\in \mathbb{R}^{N\cdot d\times N\cdot d}$, $ \bl{L}=( L \otimes I_N)\in \mathbb{R}^{N\cdot d\times N\cdot d}$,  $ \bl{D}_x=( I_d\otimes D_x)\in \mathbb{R}^{N\cdot d\times N\cdot d},$  $ I_N\in\mathbb{R}^{N\times N},I_d\in\mathbb{R}^{d\times d} $ are identity matrices, and  $ \otimes $ denotes the Kronecker product.
For temporal discretization, we employ Kahan's method to obtain a linearly implicit global energy preserving (LIGEP) scheme. For details on temporal discretization , see \cite{eidnes2020}.

\section{Structure-preserving model reduction}\label{sec:ROM}
In this section, we first review the structure-preserving MOR method for multi-symplectic PDEs \eqref{eqn:Ms_PDEs} introduced in \cite{uzunca2023global,yildiz23}, then we propose a non-intrusive MOR method that is energy preserving. To construct a spatially discrete energy preserving ROM, we first define the snapshot matrix of the discrete state vectors as follows:
\begin{equation}\label{eqn:data_mat}
	\bl G=\left[\bl z(t_1),\ldots,\bl z(t_{N_t})\right]\in \mathbb{R}^{d\cdot N\times N_t}.
\end{equation}
where $  \bl z(t_i) \in \mathbb{R}^{d\cdot N} $ is the discrete state vector for $ i=1,2,\ldots,N_t $. Following \cite{uzunca2023global}, we obtain the POD basis with the following optimization problem:
\begin{equation}\label{eqn:opt}
	\min_{\substack{ \bl V }} \| \bl G - \bl V\bl V^\top \bl G \|_F, \quad \text{subject to}\quad \bl V^\top \bl K\bl V=\bl K_r,  \ \bl V^\top \bl L\bl V=\bl L_r,
\end{equation}
where $ \|\cdot\|_{F} $ denotes the Frobenius norm; $ \bl{K}_r=(K\otimes I_r)\in \mathbb{R}^{r\cdot d\times r\cdot d}$, $ \bl{L}_r=( L \otimes I_r)\in \mathbb{R}^{r\cdot d\times r\cdot d}$ with $ I_r \in \mathbb{R}^{r\times r} $ being the identity matrix. The solution to the above optimization problem can be computed using the POD basis of the following extended matrix
\begin{equation}\label{eqn:global_snap}
	\bl Z=[\bl Z_1,\ldots, \bl Z_d]\in \mathbb{R}^{N\times d\cdot N_t}
\end{equation}
where $ N_t $ is the number of time steps, and
$$ \bl Z_i=\left[[z_{i,1}(t_1),\ldots,z_{i,N}(t_1)]^\top,\ldots, [z_{i,1}(t_{N_t}),\ldots,z_{i,N}(t_{N_t})]^\top \right]\in \mathbb{R}^{N\times N_t}, \quad i=1,\ldots,d. $$ Let $ V\in \mathbb{R}^{N\times r} $ contain the POD basis of the global snapshot matrix \eqref{eqn:global_snap}, then we can construct the projection matrix as $ \bl{V}=( I_d\otimes V)\in \mathbb{R}^{d\cdot N\times d\cdot r}  $, where $ I_d\in \mathbb{R}^{d\times d} $ is the identity matrix.
The matrix $\bl V$ is an orthogonal matrix, i.e., $ \bl{V}^\top \bl{V}=I_{d\cdot r } $, so that the following properties hold:
\begin{align}\label{eq:reduced_relation}
	\bl V^\top \bl K\bl V &= \bl K_r,  & \bl V^\top \bl L\bl V&= \bl L_r, &
	\bl{V}^\top\bl{K}&= \bl{K}_r\bl{V}^\top, & \bl{V}^\top\bl{L}&=  \bl{L}_r\bl{V}^\top.
\end{align}
After obtaining the POD matrix, we construct our ROM by substituting the approximation $ \bl z\approx \blh{z}=\bl{V}\blt z$ in the semi-discrete model \eqref{eqn:Ms-semi-discrete}, which gives the following result:
\begin{equation}\label{FOM-approx}
	\bl{K} \bl{V}\bltdt z+\bl{L}\bl{D}_x\bl{V}\blt z=\nabla_{\bl z}\bl S( \bl{V}\blt z) + \cR(\blt z),
\end{equation}
where $\cR(\blt z)$ is the residual. Assuming $\cR(\blt z)$ is orthonormal to $\bl V$, the ROM can be written as follows:
\begin{equation}\label{eqn:Ms-proj}
	\bl{V}^\top\bl{K} \bl{V}\bltdt z+\bl{V}^\top\bl{L}\bl{D}_x\bl{V}\blt z=\bl{V}^\top\nabla_{\bl z}\bl S( \bl{V}\blt z).	
\end{equation} 
Using the properties in \eqref{eq:reduced_relation}, we rewrite the above equations as follows:
\begin{equation}\label{MS-sd-rom}
	\bl{K}_r\bltdt z+\bl{L}_r\blt{D}_x\blt z=\nabla_{\blt z}\bl S(\bl{V}\blt z),
\end{equation} 
where $ \blt{D}_x= \bl V^\top\bl{D}_x \bl V=( I_d\otimes \tilde{D}_x) $.
Using the notation introduced in \eqref{eqn:element}, we can write the semi-discrete equations \eqref{MS-sd-rom} as
\begin{equation}\label{eqn:MS-sd-rom-element}
	K\bltdt z_m+ L(\blt{D}_x \blt z)_m =\nabla_{\blt z}\bl S((\bl{V}\blt z)_m), \quad m=1,\ldots,r.
\end{equation} 
\begin{theorem}\label{th:semi-discrete-cons-law}
The spatially-discrete equation \eqref{eqn:Ms-proj} yields a semi-discrete multi-symplectic conservation law
\begin{equation*}
	\partial_t \omega+\kappa=0, \quad\omega=\bl{d}\blt{z}\wedge\bl{K}_r \bl{d}\blt z,\quad\kappa=2\bl{d}\blt{z}\wedge\bl{L}_r\blt{D}_x\bl{d}\blt z.
\end{equation*}
\end{theorem}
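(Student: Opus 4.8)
The plan is to run the classical variational derivation of multi-symplecticity directly on the projected equations. First I would rewrite \eqref{eqn:Ms-proj} in the compact form \eqref{MS-sd-rom}, namely $\bl{K}_r\bltdt{z}+\bl{L}_r\blt{D}_x\blt{z}=\nabla_{\blt z}\bl{S}(\bl{V}\blt{z})$, using the relations \eqref{eq:reduced_relation}, and record the structural facts needed below: $\bl{K}_r=K\otimes I_r$ and $\bl{L}_r=L\otimes I_r$ inherit skew-symmetry from $K$ and $L$, and $\blt{D}_x=\bl{V}^\top\bl{D}_x\bl{V}$ is skew-symmetric. Differentiating \eqref{MS-sd-rom} in state space (i.e.\ applying the exterior differential $\bl{d}$, which commutes with $\partial_t$ and with the constant matrices $\bl{K}_r$ and $\bl{L}_r\blt{D}_x$) yields the variational equation
\begin{equation*}
	\bl{K}_r\,\bl{d}\bltdt{z}+\bl{L}_r\blt{D}_x\,\bl{d}\blt{z}=\bl{H}\,\bl{d}\blt{z},\qquad \bl{H}:=\nabla_{\blt z}^2\bl{S}(\bl{V}\blt{z})=\bl{V}^\top\big(\nabla_{\bl z}^2\bl{S}\big)(\bl{V}\blt{z})\,\bl{V}=\bl{H}^\top .
\end{equation*}
Taking the wedge product of this identity from the left with $\bl{d}\blt{z}$ gives
\begin{equation*}
	\bl{d}\blt{z}\wedge\bl{K}_r\,\bl{d}\bltdt{z}+\bl{d}\blt{z}\wedge\bl{L}_r\blt{D}_x\,\bl{d}\blt{z}=\bl{d}\blt{z}\wedge\bl{H}\,\bl{d}\blt{z} .
\end{equation*}

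Next I would evaluate the three terms. The right-hand side vanishes because $\bl{H}$ is symmetric: for any symmetric matrix $M$ one has $\sum_{i,j}M_{ij}\,\bl{d}u_i\wedge\bl{d}u_j=0$ by the antisymmetry of $\wedge$. For the first term on the left, skew-symmetry of $\bl{K}_r$ yields $\bl{d}\bltdt{z}\wedge\bl{K}_r\,\bl{d}\blt{z}=\bl{d}\blt{z}\wedge\bl{K}_r\,\bl{d}\bltdt{z}$, whence $\partial_t\big(\bl{d}\blt{z}\wedge\bl{K}_r\,\bl{d}\blt{z}\big)=2\,\bl{d}\blt{z}\wedge\bl{K}_r\,\bl{d}\bltdt{z}$, so that this term equals $\tfrac12\partial_t\omega$ with $\omega=\bl{d}\blt{z}\wedge\bl{K}_r\,\bl{d}\blt{z}$. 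The second term on the left is, by the very definition of $\kappa$ in the statement, exactly $\tfrac12\kappa$. Substituting the three evaluations into the wedged identity gives $\tfrac12\partial_t\omega+\tfrac12\kappa=0$, i.e.\ $\partial_t\omega+\kappa=0$, which is the claimed semi-discrete multi-symplectic conservation law. As an optional remark, $\bl{L}_r\blt{D}_x=L\otimes\tilde{D}_x$ is itself symmetric, being a product of two commuting skew-symmetric matrices, so in fact $\kappa\equiv0$ and the reduced flow conserves the discrete symplectic two-form $\omega$ globally, which is consistent with the global energy preservation of the scheme.

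The computation itself is short and elementary; the only point that requires real care is the algebraic bookkeeping after reduction — that $\bl{V}^\top\bl{K}\bl{V}$ and $\bl{V}^\top\bl{L}\bl{V}$ remain skew-symmetric, and that conjugation by $\bl{V}$ does not destroy the skew-symmetry of the discrete spatial operator — which is exactly what the constraints in \eqref{eqn:opt} (equivalently the relations \eqref{eq:reduced_relation}) secure; together with the symmetry of the reduced Hessian $\bl{H}$ these are the only ingredients, so I do not anticipate a genuine obstacle. The one caveat worth flagging is the standing assumption, already invoked in passing from \eqref{FOM-approx} to \eqref{eqn:Ms-proj}, that the residual $\cR(\blt{z})$ is orthogonal to $\bl{V}$.
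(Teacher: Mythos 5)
Your proposal is correct and follows essentially the same route as the paper's proof: form the variational equation of the reduced system, wedge with $\bl{d}\blt{z}$, kill the right-hand side by symmetry of the reduced Hessian, and use skew-symmetry of $\bl{K}_r$ to identify the remaining term with $\tfrac12\partial_t\omega$. Your closing observation that $\bl{L}_r\blt{D}_x=L\otimes\tilde{D}_x$ is symmetric, so that $\kappa$ vanishes and total symplecticity is conserved, is also exactly the remark the paper makes at the end of its proof.
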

\begin{proof}
We start with the semi-discrete form of the governing equations 
\begin{equation*}
	\bl{K}_r\bltdt z+\bl{L}_r\blt{D}_x\blt z=\bl{V}^\top\nabla_{\bl z}\bl S(\bl{V}\blt z).
\end{equation*}	
Then the associated variational equation becomes
\begin{equation*}
	\bl{K}_r \bl{d}\bltdt z+\bl{L}_r\blt{D}_x\bl{d}\blt z=S_{\blt{z}\blt{z}}(\blt{z})\bl{d}\blt{z},
\end{equation*}
where $S_{\blt{z}\blt{z}}(\blt{z})=\bl{V}^\top\bl S_{zz}(\bl{V}\blt z)\bl{V}$ is the Hessian matrix.
Since $S_{\blt{z}\blt{z}}$ is symmetric, taking the wedge product of the above equation with $\bl{d}\blt{z}$, we end up with
\begin{equation*}
	\bl{d}\blt{z}\wedge\bl{K}_r \bl{d}\bltdt z+\bl{d}\blt{z}\wedge\bl{L}_r\blt{D}_x\bl{d}\blt z=0
\end{equation*}
which implies
\begin{align*}
	0&=\partial_t (\bl{d}\blt{z}\wedge\bl{K}_r \bl{d}\blt z)-\bl{d}\bltdt{z}\wedge\bl{K}_r \bl{d}\blt z+\bl{d}\blt{z}\wedge\bl{L}_r\blt{D}_x\bl{d}\blt z\\
	&=\partial_t (\bl{d}\blt{z}\wedge\bl{K}_r \bl{d}\blt z)+\bl{K}_r\bl{d}\bltdt{z}\wedge \bl{d}\blt z+\bl{d}\blt{z}\wedge\bl{L}_r\blt{D}_x\bl{d}\blt z\\
	&=\partial_t (\bl{d}\blt{z}\wedge\bl{K}_r \bl{d}\blt z)-\bl{d}\blt{z}\wedge \bl{K}_r \bl{d}\bltdt z+\bl{d}\blt{z}\wedge\bl{L}_r\blt{D}_x\bl{d}\blt z\\
	&=\partial_t (\bl{d}\blt{z}\wedge\bl{K}_r \bl{d}\blt z)+2\bl{d}\blt{z}\wedge\bl{L}_r\blt{D}_x\bl{d}\blt z.\\
\end{align*}
Let us denote $\bl{A}_r=\blt{D}_x\bl{L}_r$. Since  $\blt{D}_x\bl{L}_r=\bl{L}_r\blt{D}_x$, $\bl{A}_r$ is symmetric  which implies conservation of total symplecticity over time.
\end{proof}

\begin{theorem}\label{th:semi-discrete-local-energy}
	The spatially-discrete equation \eqref{eqn:MS-sd-rom-element} yields a semi-discrete energy conservation law
	\begin{equation*}
		\partial_tE_m+F_{m}=0, \quad m=1,\ldots,r
	\end{equation*}
	with
	\begin{align*}
		E_m&=S( \blh{z}_m)-\frac{1}{2}\langle  \blt{z}_{m},L(\blt{D}_x \blt z)_m\rangle,\\
		F_{m}&=\sum_{k=1}^{r}(\tilde{D}_x)_{m,k}\langle  \blt{z}_{m},L(\blt{D}_x \bltdt z)_m\rangle.
	\end{align*}
\end{theorem}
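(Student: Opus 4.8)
The plan is to transfer the classical derivation of the local energy conservation law $E_t+F_x=0$ for~\eqref{eqn:Ms_PDEs} to the reduced semi-discrete system~\eqref{eqn:MS-sd-rom-element}. In the continuous case one pairs~\eqref{eqn:Ms_PDEs} with $z_t$: skew-symmetry of $K$ kills $z_t^\top K z_t$, the chain rule turns $\langle z_t,\nabla S(z)\rangle$ into $\partial_t S(z)$, and what remains is $\partial_t S(z)+z_x^\top L z_t=0$; one then writes $z_x^\top L z_t=\tfrac12\bigl(\partial_x(z^\top L z_t)-\partial_t(z^\top L z_x)\bigr)$ using the Leibniz rule and the skew-symmetry of $L$, which rearranges to $\partial_t\bigl(S(z)-\tfrac12 z^\top L z_x\bigr)+\tfrac12\partial_x(z^\top L z_t)=0$. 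Since $-\tfrac12 z^\top L z_x=\tfrac12 z_x^\top L z$, this is precisely the claimed density under the correspondence $\partial_x\leftrightarrow\blt D_x$; in particular the factor $\tfrac12$ in $E_m$ originates from this symmetrization. The reduced argument should run along the same lines with $z$ replaced by the reduced element $\blt z_m$, $\partial_x$ by $\tilde D_x$, and $\nabla S$ by the projected nonlinearity $\nabla_{\blt z}\bl S((\bl V\blt z)_m)$.

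Concretely, I would pair~\eqref{eqn:MS-sd-rom-element} with $\bltdt z_m$, so that $\langle\bltdt z_m,K\bltdt z_m\rangle=0$ by skew-symmetry of $K$ and $\langle\bltdt z_m,L(\blt D_x\blt z)_m\rangle=\langle\bltdt z_m,\nabla_{\blt z}\bl S((\bl V\blt z)_m)\rangle$; the latter should reduce, via the chain rule and the projection identities~\eqref{eq:reduced_relation}, to $\partial_t S(\blh z_m)$, yielding the discrete counterpart $\partial_t S(\blh z_m)+\langle(\blt D_x\blt z)_m,L\bltdt z_m\rangle=0$ of the continuous relation above. Differentiating the claimed density $E_m=S(\blh z_m)-\tfrac12\langle\blt z_m,L(\blt D_x\blt z)_m\rangle$ in $t$ and substituting this relation, together with the skew-symmetry of $L$, should leave a remainder that, once $\blt D_x$ is expanded through its Kronecker structure $I_d\otimes\tilde D_x$, has exactly the form $\sum_{k}(\tilde D_x)_{m,k}(\cdots)$ claimed for $F_m$. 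The skew-symmetry of $\tilde D_x$ — inherited from $\bl D_x$ via $\blt D_x=\bl V^\top\bl D_x\bl V$ — and the commutation $\blt D_x\bl L_r=\bl L_r\blt D_x$ from~\eqref{eq:reduced_relation}, which are the same structural facts used in the proof of Theorem~\ref{th:semi-discrete-cons-law}, are what make the remaining terms close up.

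I expect the main obstacle to be the nonlinear term together with the absence, in the discrete setting, of a pointwise Leibniz/chain rule for $\tilde D_x$: the matrix $\tilde D_x$ only obeys a summation-by-parts identity through its skew-symmetry, so the continuous manipulation $z_x^\top L z_t=\tfrac12(\partial_x(\cdots)-\partial_t(\cdots))$ cannot be carried over term by term. One therefore has to compute $\partial_t E_m$ directly, insert the governing equation, and \emph{read off} $F_m$ from the residue rather than derive it by a discrete analogue of integration by parts; verifying that this residue is exactly the stated $F_m$, with the correct sign and factor, is the core of the argument. Closely related is the bookkeeping around $\nabla_{\blt z}\bl S((\bl V\blt z)_m)$: one must use that it is the gradient, in the reduced coordinates, of the single scalar potential $\bl S(\bl V\blt z)$, and track carefully how the element-wise notation~\eqref{eqn:element} interacts with the Kronecker factors $\bl K_r=K\otimes I_r$, $\bl L_r=L\otimes I_r$, $\blt D_x=I_d\otimes\tilde D_x$. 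The natural consistency check is that summing $\partial_t E_m+F_m=0$ over $m$ and telescoping under periodic boundary conditions recovers the global energy law~\eqref{eqn:GCL}.
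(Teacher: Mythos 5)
Your proposal follows essentially the same route as the paper's proof: pair \eqref{eqn:MS-sd-rom-element} with $\bltdt z_m$, use skew-symmetry of $K$ to annihilate $\langle\bltdt{z}_m,K\bltdt z_m\rangle$, invoke the chain rule $\partial_t S(\blh z_m)=\langle\blhdt{z}_m,\nabla_z S(\blh z_m)\rangle$, and then split the remaining $L$-term into a time derivative (absorbed into $E_m$) plus the flux $F_m$. Your observation that the factor $\tfrac12$ in $E_m$ arises from symmetrizing $\langle\bltdt{z}_m,L(\blt{D}_x\blt z)_m\rangle$ is in fact a useful clarification, since the paper's displayed computation omits that factor.
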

\begin{proof}
	Taking the inner product of both sides \eqref{eqn:MS-sd-rom-element} with $ \bltdt z$, we  have
	$$\langle \bltdt{z}_{m},K\bltdt z_m \rangle+ \langle \bltdt{z}_{m},L(\blt{D}_x \blt z)_m\rangle =\langle \bltdt{z}_{m},\nabla_{\blt z}\bl S((\bl{V}\blt z)_m)\rangle, \quad m=1,\ldots,r.$$
	Since $K$ is skew-symmetric, we have $ \langle \blhdt{z}_{m},K\blhdt z_{m}\rangle=0 $. Additionally, using the identity $ \partial_t S(\blh z_m)=\langle \blhdt{z}_{m},\nabla_zS(\blh z_m)\rangle $, we have
	\begin{align*}
		0&=\partial_t\left[S(\blh z_m)- \langle \blt{z}_{m},L(\blt{D}_x \blt z)_m\rangle \right] +\langle \bltdt{z}_{m},L(\blt{D}_x \blt z)_m\rangle, \\
		&=\partial_t\left[S(\blh z_m)- \langle \blt{z}_{m},L(\blt{D}_x \blt z)_m\rangle \right] +\sum_{k=1}^{r}(\tilde{D}_x)_{m,k}\langle  \blt{z}_{m},L(\blt{D}_x \bltdt z)_m\rangle, \quad m=1,\ldots,r,
	\end{align*}
	which concludes the proof. 
\end{proof}

\Cref{th:semi-discrete-cons-law,th:semi-discrete-local-energy} shows that the ROM is a multi-symplectic scheme and it preserves the semi-discrete local energy conservation laws. These properties are important in the sense that the desired solutions of the full models are obtained by time integration of the proposed ROM, and preserving these properties has been shown to lead to robust models \cite{frank2006linear,ascher2004multisymplectic}. Preserving multi-symplectic formulations in the fully discrete form is not a straightforward task, as preserving the multi-symplectic conservation law does not necessarily mean that all the invariants will be preserved, e.g. the Preissmann box scheme preserves the multi-symplectic conservation laws but does not preserve the local energies for nonlinear Hamiltonian systems \cite{moore2003}. One way to construct a fully discrete multi-symplectic formulation from semi-discrete equations is to use symplectic integrators. Consider the semi-discrete equation \eqref{eqn:Ms-semi-discrete}, discretizing the time domain with a symplectic integrator, e.g. with the implicit midpoint rule, yields the following
\begin{equation}\label{eqn: midpoint}
		\delta_{t}\bl{K}_r\blt z^{n}+\bl{L}_r\blt{D}_x \mu_{t}\blt z^{n}=\nabla_{\blt z}\bl S(\mu_{t}\bl{V}\blt z^{n}).
\end{equation}
The fully-discrete scheme \eqref{eqn: midpoint} preserve the multi-symplectic conservation law. Consider the variational equations associated with \eqref{eqn: midpoint}
\begin{align*}
		\delta_{t}\bl{K}_r\bl{d}\blt z^{n}+\bl{L}_r\blt{D}_x \bl{d}(\mu_{t}\blt z^{n})=\bl{S}_{\blt{z}\blt{z}}(\mu_{t}\blt{z}^n)\bl{d}(\mu_{t}\blt z^{n})
\end{align*}
taking the wedge product of both sides with $\bl{d}\mu_{t}\blt{z}^n$  of the above equation again implies the preservation of the total symplectic form in time as it reads as 
\begin{align*}
	0&=\bl{d}\mu_{t}\blt{z}^n\wedge\delta_{t}\bl{K}_r\bl{d}\blt z^{n},\\
	 &=\frac{1}{2}\delta_{t}\left[\bl{d}\blt{z}^n\wedge\bl{K}_r\bl{d}\blt z^{n}\right].
\end{align*}
An alternative is using Kahan's method \cite{eidnes2020} that has shown to satisfy the global energy preservation laws. For linear multi-symplectic PDEs, Kahan's method coincides with the midpoint rule \eqref{eqn: midpoint}.
%\subsection{Multi-symplectic OpInF}
We describe a non-intrusive way to recover the multi-symplectic discretizations. Our key ingredient is, as explained in the previous subsection, is that for any skew-symmetric matrix, multi-symplectic laws can be preserved. 
First let us define the following time derivative data matrix
\begin{equation*}
	\bl G=\left[\bldt z^1,\ldots,\bldt z^{N_t}\right]\in \mathbb{R}^{d\cdot N\times N_t}.
\end{equation*}
 Furthermore, we define the following reduced snapshot matrix and time derivative matrix
\begin{equation}\label{eqn:reduced-data-mat}
	\blt{G}=\bl{V}^T\bl{G}, \quad \bltdt{G}=\bl{V}^T\bldt{G}
\end{equation}
Assuming the basic knowledge of the Hamiltonian $S$, we define the following nonlinear forcing snapshot matrix and its projection as follows:
\begin{equation}
	\bl{F}=\left[\grad_{\bl{z}}\bl{S}(\bl{z}^1),\ldots,\grad_{\bl{z}}\bl{S}(\bl{z}^{N_t})\right], \quad, \blt{F}=\bl{V}^T\bl{F}.
\end{equation}
The reduced-order operators can be learned by standard operator inference in the continuous-time setting by solving the following optimization problem:
\begin{equation}\label{eqn:MS-cont-opinf}
		\min_{\blt{D}_x=-\blt{D}_x^T}\left\|\bl{K}_r\bltdt G+\bl{L}_r\blt{D}_x\blt G-\blt{F}(\blh{z}^n)\right\|_F.
\end{equation}

Learning the reduced-order operators via the above problem can be problematic in the sense that the time derivatives have to be approximated. Instead, one can learn the reduced-order operators using time-discrete equations, e.g. consider \eqref{eqn: midpoint}, in which case the reduced-order operators can be obtained by solving the following problem:
\begin{equation}\label{eqn:MS-OpInf-discrete}
	\min_{\blt{D}_x=-\blt{D}_x^T}\left\|\delta_{t}\bl{K}_r\blt G^n+\bl{L}_r\blt{D}_x \mu_{t}\blt G^n-\mu_{t}\blt{F}^n\right\|_F.
\end{equation}
where $\bl G^n=\left[\bl z^1,\ldots,\bl z^{N_t-1}\right]$, and $\bl G^{n+1}=\left[\bl z^2,\ldots,\bl z^{N_t} \right]$, i.e. the upper index notation implies standard and time-shifted data matrices in the matrix case. Learning a model via the above optimization problem \eqref{eqn:MS-OpInf-discrete} might be problematic because the auxiliary variables need to be eliminated. So it might be useful to know the final form of the equations in which the auxiliary variables are eliminated and learn the reduced model via the final time-discrete form of the equations.
\begin{remark}
	Since \eqref{eqn: midpoint} is a multi-symplectic scheme for any skew-symmetric matrix $\blt{D}_x$, the learned model through the optimization problem \eqref{eqn:MS-OpInf-discrete} results in a multi-symplectic model. The proposed model can be applied to time discrete or continuous equations and different strategies can be used, such as learning the reduce operators in \eqref{MS-sd-rom} and then finding a suitable time integration method.
\end{remark}

\section{Numerical Results}\label{sec:num}	
In this section, we study the performance of multi-symplectic OpInf in terms of discrete energy conservation and accuracy outside the training interval, in order to assess the generality of the method. We test our method on three different examples, namely the wave equation, the Korteweg--de Vries (KdV) equation and the Zakharov-Kuznetsov (ZK) equation. We employ Kahan's method \cite{eidnes2020} to \eqref{MS-sd-rom} to obtain the snapshot data. Since the wave equation is linear, Kahan's method coincides with the midpoint rule \eqref{eqn: midpoint} in time. 

To solve the optimization problem arising from the OpInf framework, we use \texttt{PyTorch} with the Adam algorithm \cite{kingma2014adam} in combination with a \texttt{ReduceLROnPlateau} scheduler with the \texttt{PyTorch Lightning} \cite{lightning} module. We set the initial learning rate of the scheduler to $10^{-2}$, the minimum learning rate to $10^{-4}$, and the threshold to $10^{-6}$.

We determine the performance of the proposed method by the convergence to the conserved energy in the following relative error:
\begin{equation}\label{eqn:rel_ener_err}
	\dfrac{|\bar{\mathcal{E}}(t_n)-\bar{\mathcal{E}_r}(t_n)|}{|\bar{\mathcal{E}}(t_n) |},
\end{equation}
where $\bar{\mathcal{E}}(t_n)$ denotes the global energy \eqref{eqn:GCL} preserved by the ground truth model and $\bar{\mathcal{E}_r}(t_n)$ denotes the global energy preserved \eqref{eqn:GCL} by the proposed non-intrusive method.

\subsection{Linear wave equation}\label{subsec:wave}
Our first example is a linear wave equation of the form  
\begin{equation}\label{eqn:wave}
	u_{tt} =c^2u_{xx},
\end{equation}
where the constant $c$ is the speed of the wave. In this example, we set the wave speed to $c=1$ and examine the performance of the proposed method on a single parameter.  The wave equation \eqref{eqn:wave} is an example of a multi-symplectic PDE \eqref{eqn:Ms_PDEs}.  To define the multi-symplectic form of the wave equation, we first define the following auxiliary variables: 
\begin{equation*}
	\begin{split}
		v=u_t, \quad w=u_x
	\end{split}.
\end{equation*}
Using the above relations, we can rewrite the wave equation \eqref{eqn:wave} in the multi-symplectic form \eqref{eqn:Ms_PDEs} with
\begin{equation*}
	z=
	\begin{bmatrix}
		u\\
		v\\
		w
	\end{bmatrix},\qquad	
	K=
	\begin{bmatrix}
		0  &-1 & 0  \\
		1  & 0 & 0  \\
		0  & 0 & 0 
	\end{bmatrix},\qquad
	L=
	\begin{bmatrix}
		0  & 0  & 1 \\
		0  & 0  & 0 \\
		-1  & 0  & 0   
	\end{bmatrix}
\end{equation*}	
and the Hamiltonian $S(z)=\frac{1}{2}(v^2-w^2)$. Applying the LIGEP method \cite{yildiz23}, the discrete wave equation reads as follows:
\begin{equation}\label{eqn:wave-LIGEP-couple}
	\begin{split}
		-\delta_t v_j^n+ \delta_x^{1/2}\mu_tw_j^n &=0,\\
		\delta_tu_j^n &= \mu_tv_j^n,\\
		-\delta_x^{1/2} \mu_t u_j^n &=-\mu_tw_j^n.
	\end{split}
\end{equation}
After removing the auxiliary variables, the fully discrete equations can be equivalently expressed as follows:
\begin{align}\label{eqn:wave_LIGEP}
	\delta_t^2u_j^n - \mu_t^2 \left(\delta_x^{1/2}\right)^2 u^{n}_j=0,
\end{align}
where $\left(\delta_x^{1/2}\right)^2=\delta_x^{1/2}\delta_x^{1/2}$. The polarized discrete energy conserved by the full model \eqref{eqn:wave_LIGEP} can be expressed as
\begin{equation}\label{eqn:wave-fom-pol-ener}
	\begin{split}
		\bar{\mathcal{E}}(t_n) = & \, \frac{\Delta x}{6}\sum_{j=1}^{N}\Big(2(\delta_x^{1/2}u^n_j)(\delta_x^{1/2} u^{n+1}_j) + (\delta_x^{1/2}u^n_j)^2+2v^n_jv^{n+1}_j + (v^n_j)^2\Big),
	\end{split}
\end{equation}
where $ v^n_j=\delta_tu^n_j-\frac{\Delta t}{2}\mu_t(\delta_x^{1/2})^{2}u^n_j .$
To construct a reduced multi-symplectic model, we define the following extended snapshot matrix:
$$\bl Z=\left[ \bl u(t_1),\ldots,\bl u(t_{N_t}),\bl v(t_1),\ldots,\bl v(t_{N_t}),\bl w(t_1),\ldots,\bl w(t_{N_t})\right]\in \mathbb{R}^{N\times 3 N_t},$$
where the states $ \bl v $ and $ \bl w $ are obtained from the relations \eqref{eqn:wave-LIGEP-couple}, which are $ v^n_j=\delta_tu^n_j-\frac{\Delta t}{2}\mu_t(\delta_x^{1/2})^{2}u^n_j $ and $ \delta_x^{1/2}  u_j^n =w_j^n .$  
By utilizing the extended snapshot matrix defined above to determine the POD basis, the fully-discrete multi-symplectic model for the wave equation using the LIGEP method can be written as follows:
\begin{equation}\label{eqn:wave_LIGEP_ROM_el}
	\begin{split}
		-\delta_t \blt v^n+ \mu_t \tilde{D}_x\blt w^n &=0,\\
		\delta_t \blt u^n &= \mu_t \blt v^n,\\
		-\mu_t \tilde{D}_x \blt u^n &=-\mu_t \blt w^n,
	\end{split}
\end{equation}
where $ \tilde{D}_x=V^\top D_xV $. As we have shown previously in \Cref{sec:ROM}, since the considered wave equation is linear, the LIGEP method coincides with the mid-point rule and satisfies the multi-symplectic conservation law \eqref{eqn:mscons}.
Removing the auxiliary variables in \eqref{eqn:wave_LIGEP_ROM_el}, the fully-discrete multi-symplectic model can be expressed as
\begin{equation}\label{eqn:wave_LIGEP_ROM}
	\delta_t^2\blt u^n - \mu_t^2 \tilde{D}_x^2 \blt u^{n}=0.
\end{equation}
The ROM \eqref{eqn:wave_LIGEP_ROM} conserves the following approximated polarized energy
\begin{equation}\label{eqn:wave-rom-pol-ener}
	\begin{split}
		\bar{\mathcal{E}_r}(t_n) = & \, \frac{\Delta x}{6}\sum_{j=1}^{N}\Big(2( V \tilde{D}_x \blt u^n)_j( V \tilde{D}_x \blt u^{n+1})_j + ( V \tilde{D}_x \blt u^n)_j^2+2\blh v^n_j \blh v^{n+1}_j + (\blh v^n_j)^2\Big),
	\end{split}
\end{equation}
where 
\begin{equation*}
	\blh v^n_j=\delta_t \blh u^n_j-\frac{\Delta t}{2}\mu_t\left(( V \tilde{D}_x)^{2}\blt u^n\right)_j.
\end{equation*}
Finally, the multi-symplectic operator inference model can be derived through the following optimization problem:
\begin{equation}\label{eqn:wave-OpInf-opt}
	\min_{\tilde{D}_x=-\tilde{D}_x^T}\left\|\delta_t^2\blt U^n - \mu_t^2 \tilde{D}_x^2 \blt U^n\right\|_F,
\end{equation}
where $\bl U^n=\left[ \bl u(t_1),\ldots,\bl u(t_{N_t})\right]$ and $\blt{U}^n=V^T\bl{U} \in \mathbb{R}^{r\times Nt}$.
We use the following initial condition:
\begin{align*}
	u_t(x,0)&=0,\\
	u(x,0)&=\sech(x),
\end{align*}
with periodic boundary conditions.
We discretize the spatial domain $\Omega=[-5,5] $ with a spatial step-size $\Delta x=0.0196$, giving a full model dimension of $N=512$, and we set the temporal step size to $\Delta t=0.1$ to construct the extended snapshot matrix \eqref{eqn:global_snap}. We set the reduced dimension $r=16$ to learn a non-intrusive reduced-order model.  
To test the generality of our proposed method, after obtaining the coefficients of the ROM, we train the proposed model with the snapshots simulated up to the final time $T=5$ and then simulate it until $T=20$. In \Cref{fig:wave-rcom}, we compare the first three reduced coefficients $\tilde{u}_i(t)$ obtained by the ground truth model \eqref{eqn:wave_LIGEP} with the trajectories of the multi-symplectic OpInf model $\breve{u}_i(t)$ obtained by \eqref{eqn:wave-OpInf-opt} in the reduced dimension as representatives. We note that the absolute error between the other coefficients are of the same magnitude. To highlight the training interval, we split the training and testing intervals with a vertical line in \Cref{fig:wave-rcom}.

In \Cref{fig:wave-OpInf-ener}, we plot the energy comparison of the discrete wave equation using the LIGEP method \eqref{eqn:wave_LIGEP} and the multi-symplectic OpInf method obtained through \eqref{eqn:wave-OpInf-opt}. We compare the discrete energies in terms of the relative error \eqref{eqn:rel_ener_err} between the ground truth model global energy \eqref{eqn:wave-fom-pol-ener} and the learned OpInf method global energy \eqref{eqn:wave-rom-pol-ener}. The figure shows that the OpInf accurately captures the global energy \eqref{eqn:wave-fom-pol-ener}.

\begin{figure}[h]
	\centering
	\begin{subfigure}{0.328\textwidth}
		\includegraphics[width=1\linewidth]{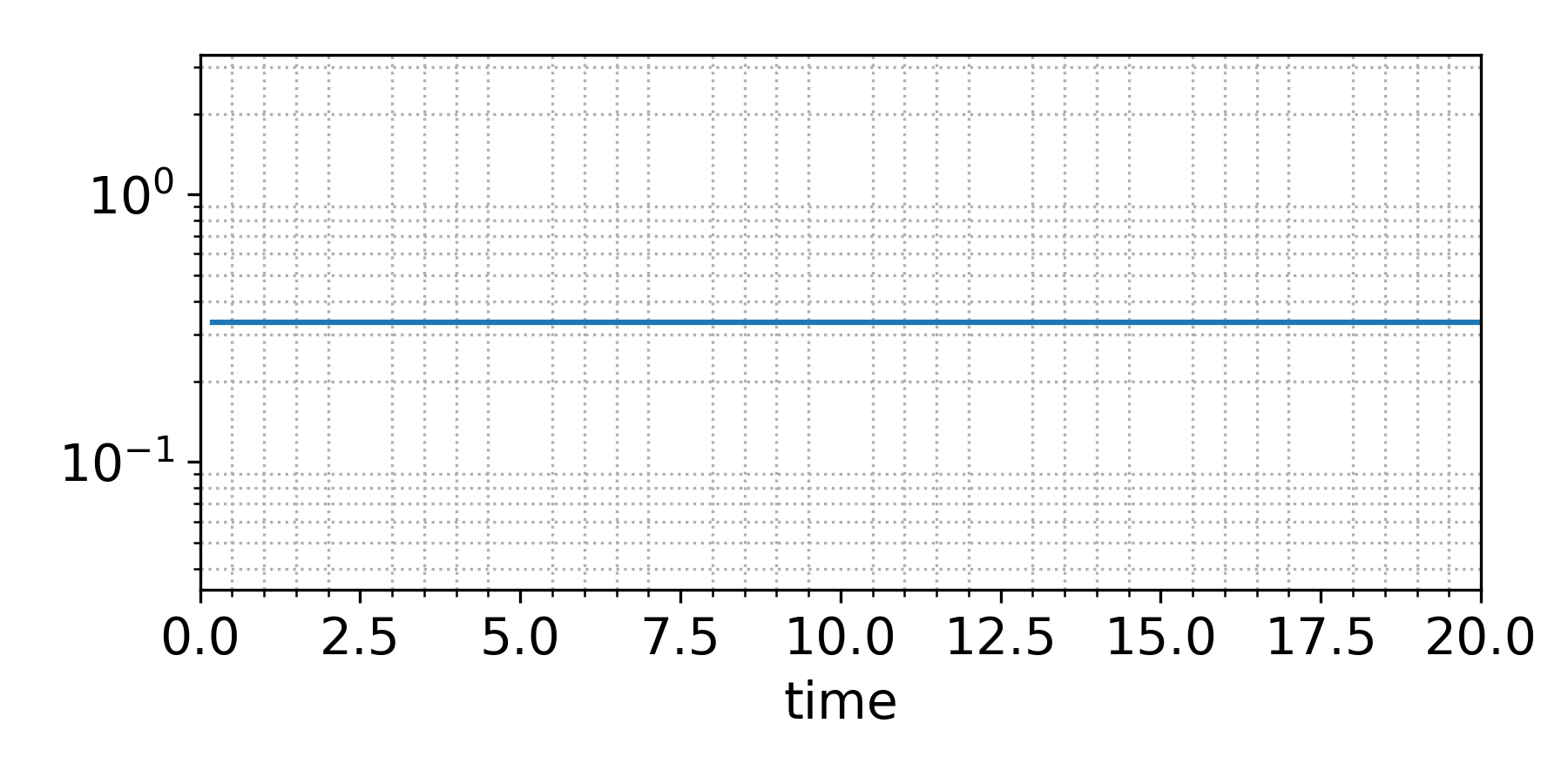}
		\caption{Ground truth energy}	
	\end{subfigure}
	\begin{subfigure}{0.328\textwidth}
		\includegraphics[width=1\linewidth]{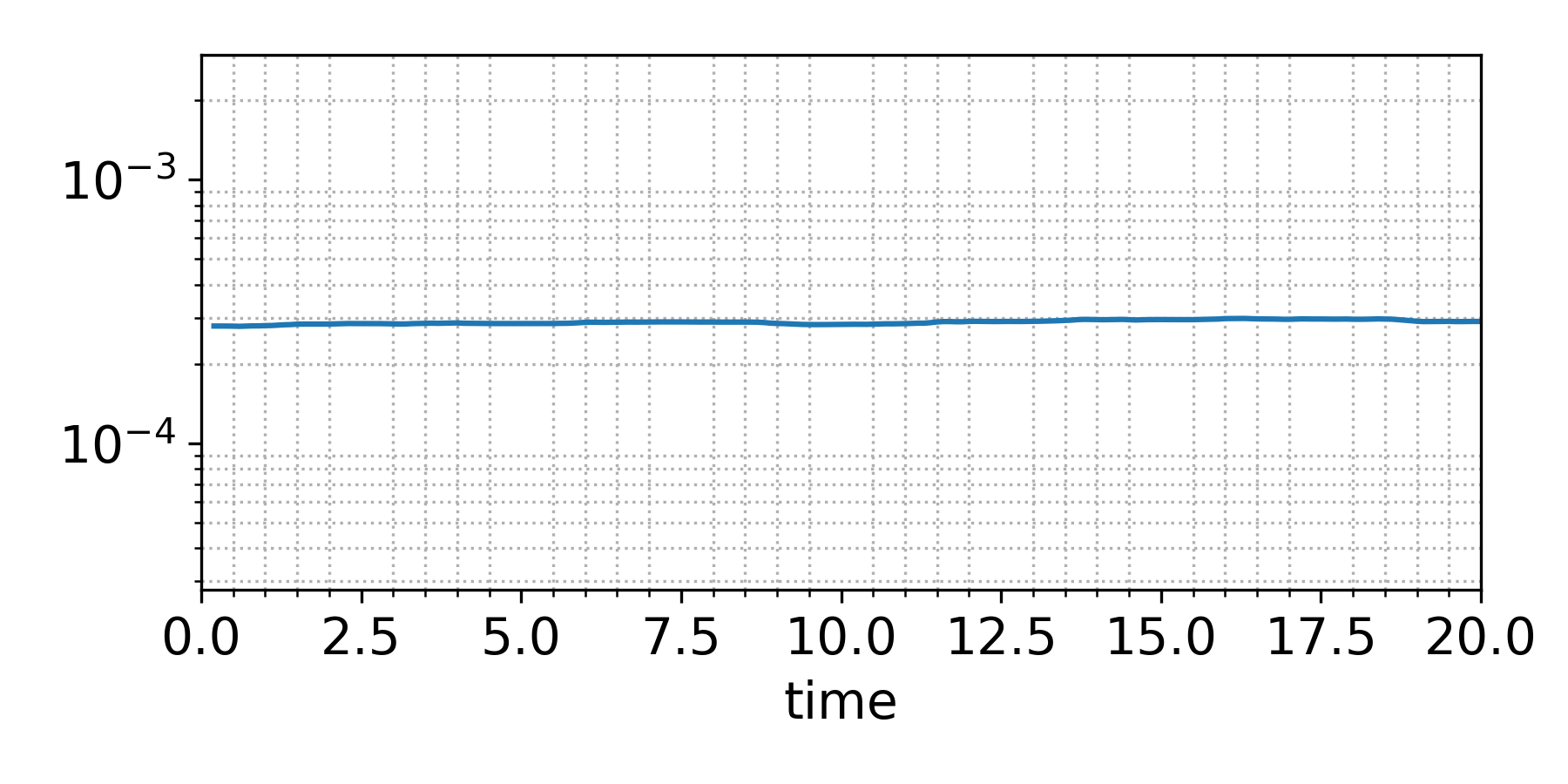} 
		\caption{Relative error}
	\end{subfigure}

	\caption{Linear wave equation: discrete energy comparison between full and reduced-order models.}
	\label{fig:wave-OpInf-ener}
\end{figure}

\begin{figure}[h]
	\centering
	\begin{subfigure}{0.328\textwidth}
	\includegraphics[width=1\linewidth]{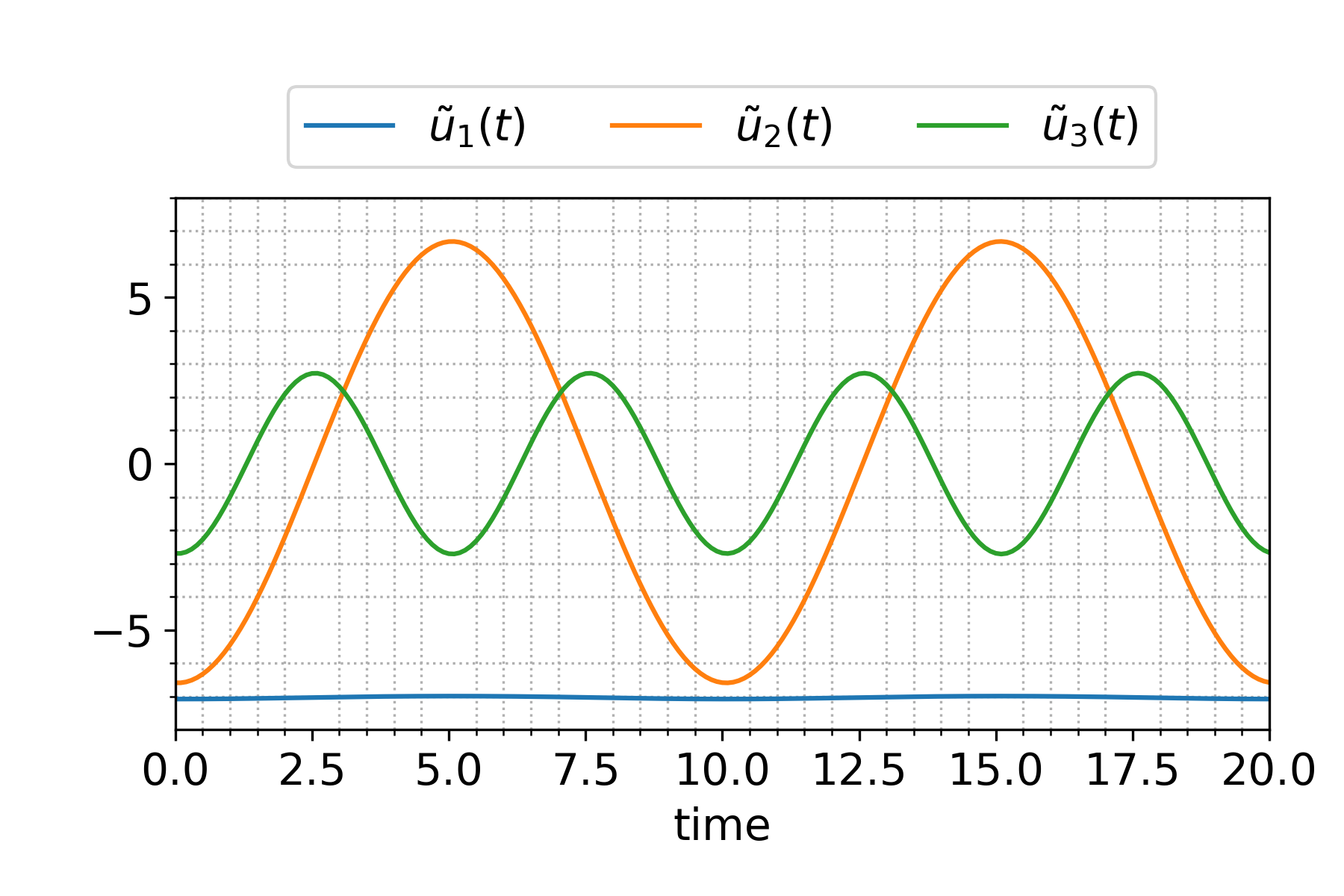}
	\caption{Ground truth}	
	\end{subfigure}
	\begin{subfigure}{0.328\textwidth}
	\includegraphics[width=1\linewidth]{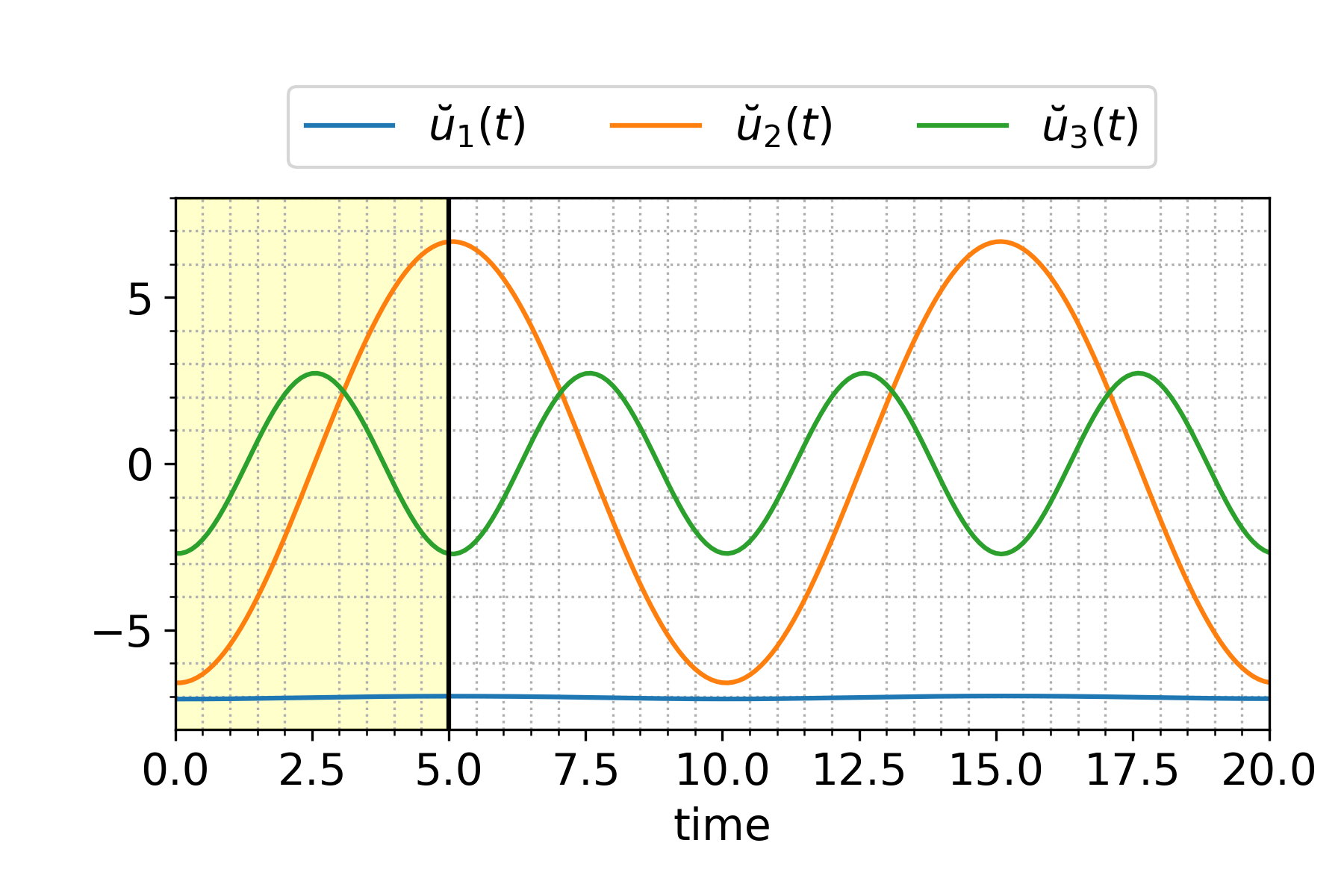} 
    \caption{OpInf}
	\end{subfigure}
	\begin{subfigure}{0.328\textwidth}
	\includegraphics[width=1\linewidth]{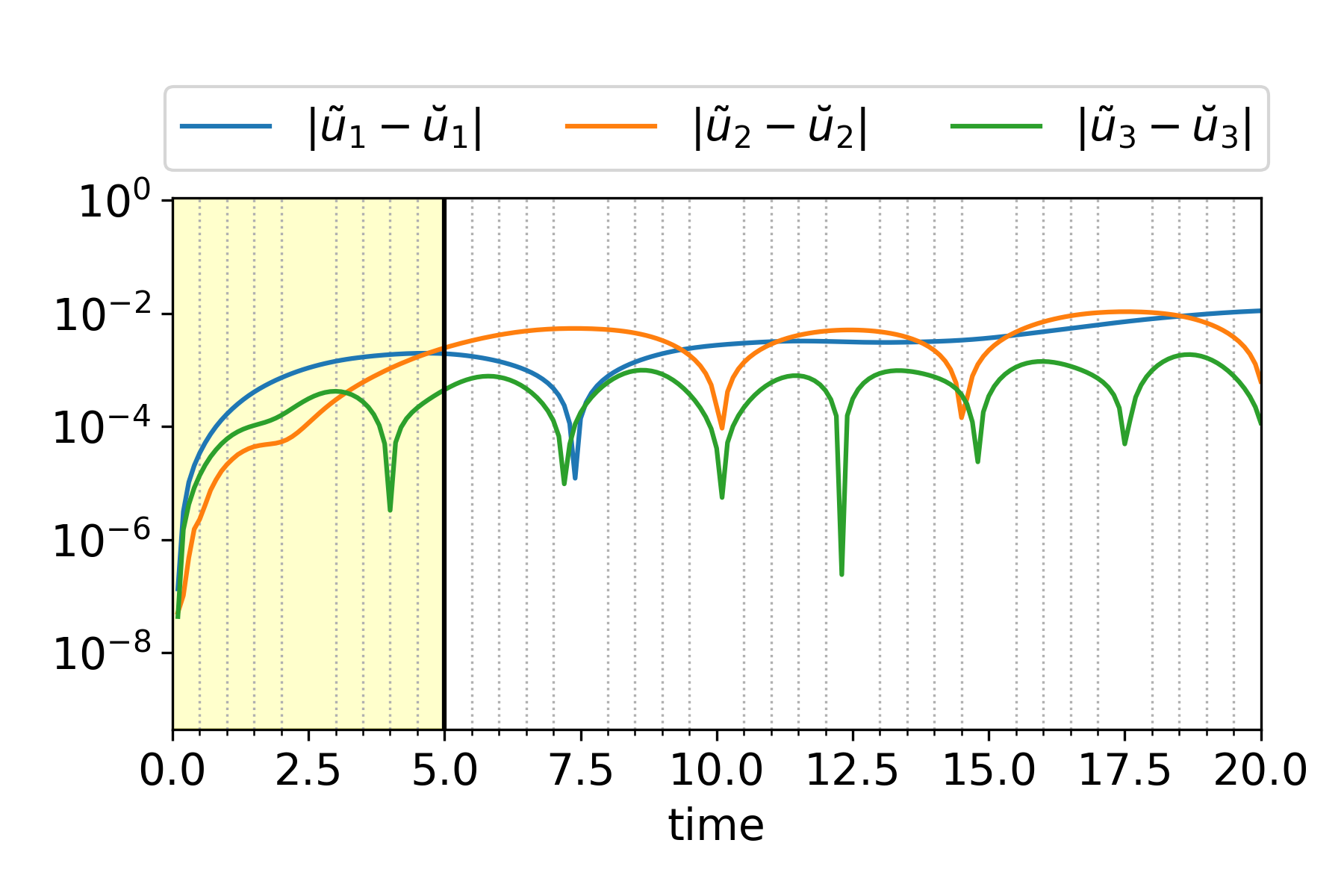}
	\caption{Absolute error}
	\end{subfigure}
	\caption{Linear wave equation: a comparison of the reduced coefficients obtained by the full and the reduced-order models. }
	\label{fig:wave-rcom}
\end{figure}

\begin{figure}[h]
	\centering
	\begin{subfigure}{0.328\textwidth}
		\includegraphics[width=1\linewidth]{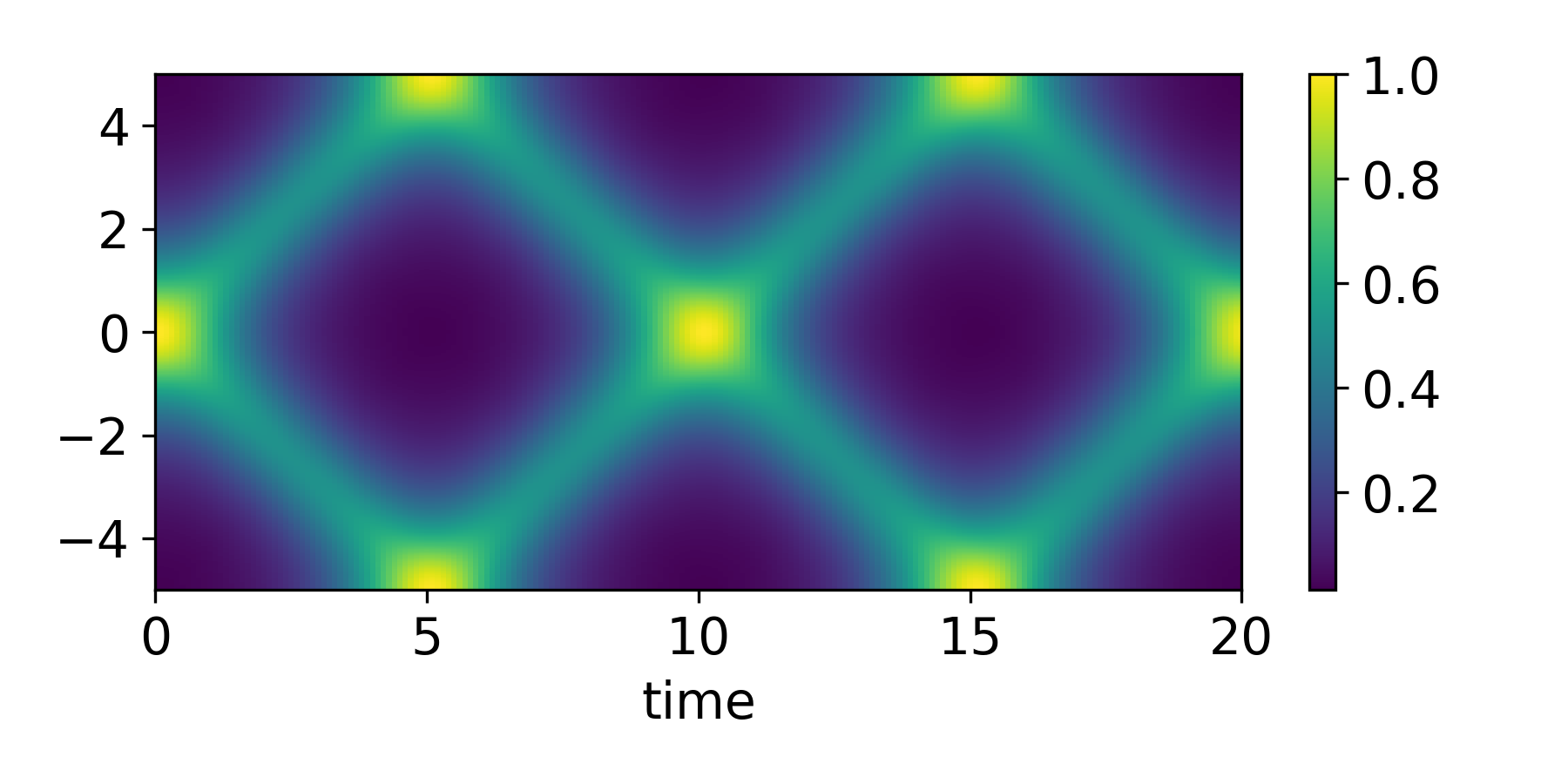}
		\caption{Ground truth}	
	\end{subfigure}
	\begin{subfigure}{0.328\textwidth}
		\includegraphics[width=1\linewidth]{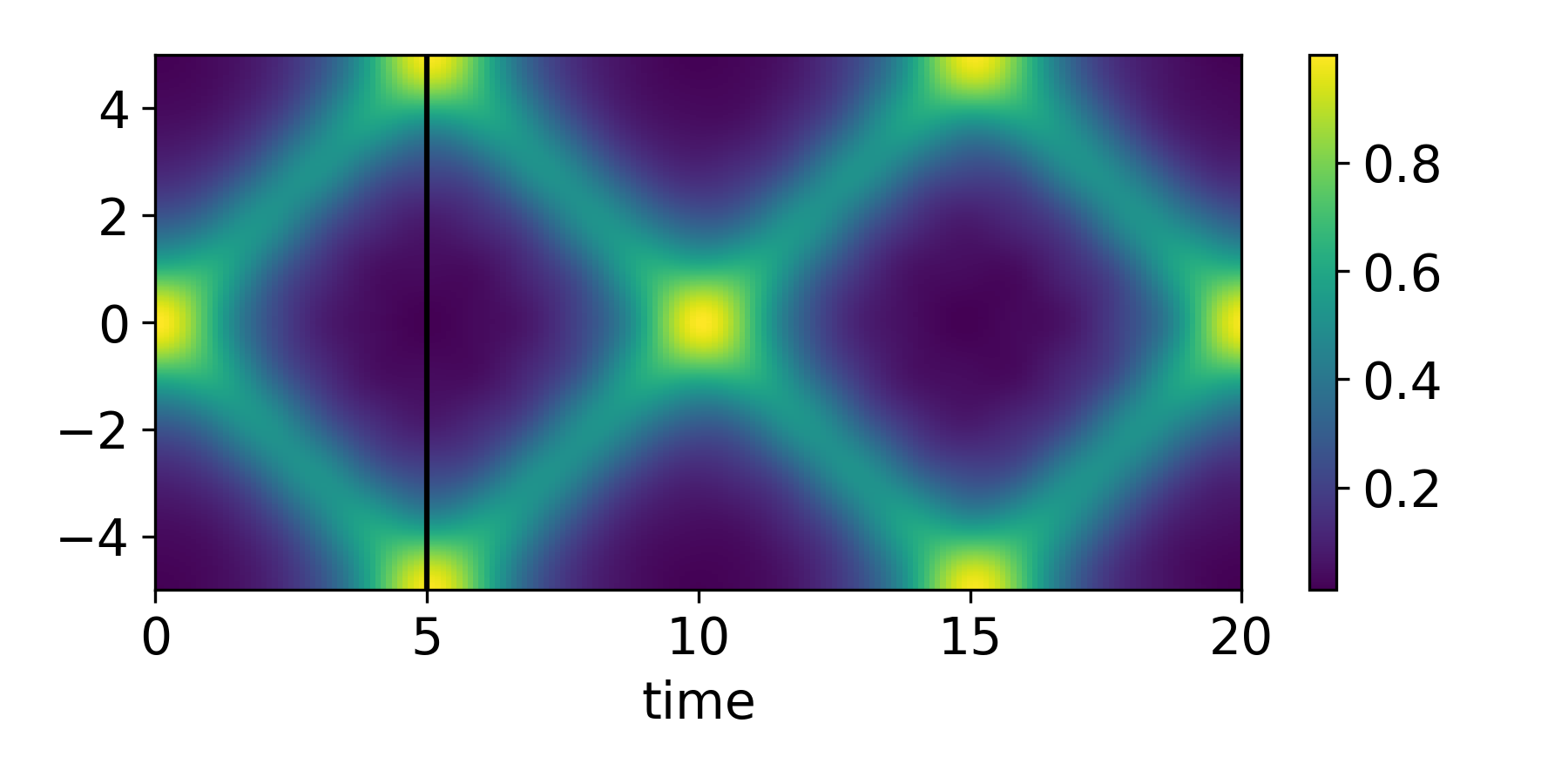} 
		\caption{OpInf}
	\end{subfigure}
	\begin{subfigure}{0.328\textwidth}
		\includegraphics[width=1\linewidth]{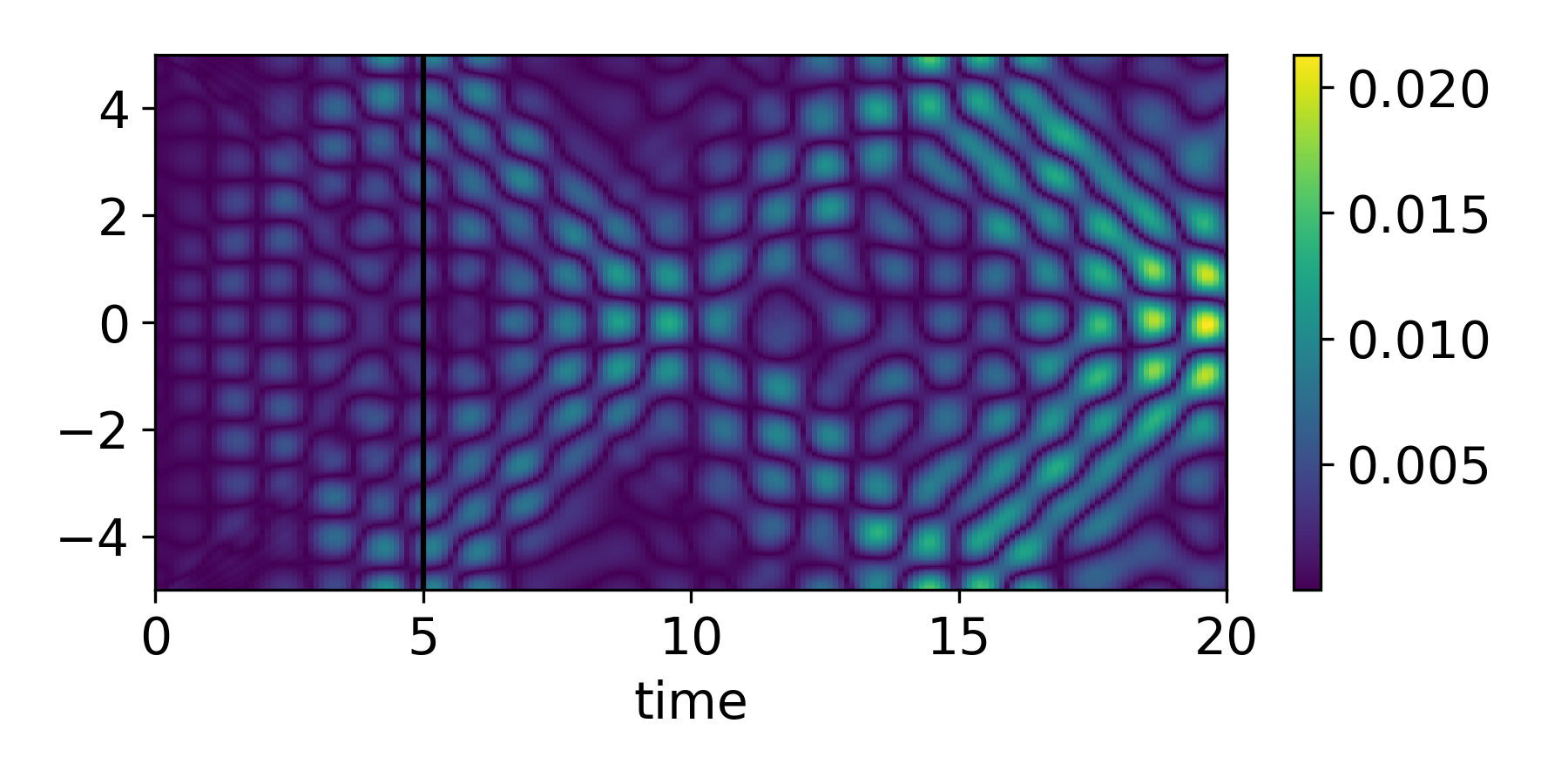}
		\caption{Absolute error}
	\end{subfigure}
	\caption{Linear wave equation: a comparison between the full and reduced-order models in full model dimension.  }
	\label{fig:wave-fcom}
\end{figure}

Finally, we compare the numerical solutions of the full model \eqref{eqn:wave_LIGEP} and the proposed OpInf model in the full dimension in \Cref{fig:wave-fcom}. For comparison, we plotted the absolute error between the full model and the OpInf model. Again, we have separate the training and test intervals with a vertical line to highlight the efficiency of the proposed method, which shows that the proposed methods capture the dynamics very well.

\subsection{Korteweg--de Vries equation}\label{subsec:kdv}
Our second test case is the one-dimensional Korteweg–de Vries (KdV) equation, which is commonly used in studies of shallow water waves, plasma physics, and internal waves. We consider the one-dimensional KdV equation of the form:
\begin{equation}\label{eqn:KdV}
	u_t + \eta u u_x+\gamma^2 u_{xxx}  = 0,
\end{equation}
where $\eta, \gamma \in \mathbb{R}$.

The multi-symplectic formalism can be revealed after defining the potential $\phi_x=u$, an auxiliary variable $w=\gamma v_x \phi_t+\frac{\gamma^2u^2}{2}$ and the momenta $v=\gamma u_x$, which have the following relations:
\begin{equation}\label{eqn:KdV-var}
	\begin{aligned}
		\frac{1}{2}u_t+w_x&=0, \quad&
		-\frac{1}{2}\phi_t-\gamma v_x&=-w+\frac{\eta}{2}u^2,\\
		\gamma u_x&=v,&
		-\phi_x&=-u.
	\end{aligned}
\end{equation}
Using the above equations, a multi-symplectic formulation \eqref{eqn:Ms_PDEs} for the KdV equation can be obtained with
\begin{equation*}
	K=\begin{bmatrix}
		0       &\frac{1}{2} & 0 & 0 \\
		-\frac{1}{2}       & 0 & 0 & 0 \\
		0      & 0 & 0 & 0 \\
		0      & 0 & 0 & 0  \\
	\end{bmatrix},\qquad
	L=
	\begin{bmatrix}
		0       &0 & 0 &1  \\
		0       & 0 & -\gamma & 0  \\
		0      & \gamma & 0 & 0  \\
		-1      & 0 & 0 & 0 \\
	\end{bmatrix},
\end{equation*}
$z = (\phi,u,v,w)^\top$, and the Hamiltonian $S(z):=\dfrac{v^2}{2}-uw+\dfrac{\eta u^3}{6}$.
The LIGEP method applied to the fully discrete KdV equation reads as \cite{yildiz23}
\begin{equation}\label{eqn:KdV-fom-elem}
	\begin{aligned}
		\frac{1}{2}\delta_t u_j^n+\delta_x^{1/2}\mu_tw_j^n &=0,&
		-\frac{1}{2}\delta_t\phi_j^n-\gamma \delta_x^{1/2}\mu_tv_j^n &=-\mu_t w_j^n+\frac{\eta}{2} u_j^n u_j^{n+1},\\
		\gamma \delta_x^{1/2}\mu_tu_j^n &=\mu_t v_j^n,&
		\delta_x^{1/2}\mu_t\phi_j^n &=\mu_t u_j^n.
	\end{aligned}
\end{equation}
After elimination of the auxiliary variables in the multi-symplectic form of the KdV equation \eqref{eqn:KdV-fom-elem}, the fully-discrete KdV equation reads as
\begin{equation}\label{eqn:KdV_LIGEP}
	\delta_tu_j^n+\frac{\eta}{2}\delta_x^{1/2}(u^n_ju^{n+1}_j)+\gamma^2\mu_t\left(\delta_x^{1/2}\right)^3u^n_j=0.
\end{equation}
The associated polarised discrete energy for the discrete KdV equation \eqref{eqn:KdV_LIGEP} is given as follows:
\begin{equation}\label{eqn:KdV-fom-pol-ener}
	\bar{\mathcal{E}}(t_n) = \, \frac {\Delta x}{6}\sum_{j=1}^{N}\left(- \gamma^2(\delta_x^{1/2}u^n_j)^2+2(\delta_x^{1/2}u^n_j)(\delta_x^{1/2} u^{n+1}_j)+\eta(u_j^n)^2 u_j^{n+1} \right).
\end{equation}
The POD basis for the multi-symplectic OpInf method is obtained by the following extended snapshot matrix:
$$\bl Z=\left[ \boldsymbol{\phi}(t_1),\ldots,\boldsymbol \phi(t_{N_t}),\bl u(t_1),\ldots,\bl u(t_{N_t}),\bl v(t_1),\ldots,\bl v(t_{N_t}),\bl w(t_1),\ldots,\bl w(t_{N_t})\right]\in \mathbb{R}^{N\times 4 N_t}.$$
We approximate the auxiliary variables for the above extended snapshot matrix as in \cite{yildiz23}.
Applying Kahan's method \cite{eidnes2020} to the reduced multi-symplectic model \eqref{MS-sd-rom} for time discretization, the reduced KdV equation can be written as follows:
\begin{equation}\label{eqn:KdV-LIGEP-ROM}
	\delta_t\blt u^n+\frac{\eta}{2}\tilde{D}_x V^\top (\blh u^n \circ\blh u^{n+1})+\gamma^2\mu_t\tilde{D}_x^3\blt u^n=0,
\end{equation}
where $\circ$ denotes the element-wise product.
The reduced KdV equation \eqref{eqn:KdV-LIGEP-ROM} conserves the following polarised energy:
\begin{equation}\label{eqn:KdV-rom-pol-ener}
	\begin{split}
		\bar{\mathcal{E}}_r(t_n) = & \, \frac {\Delta x}{6}\sum_{j=1}^{N}\Big(- \gamma^2( V \tilde{D}_x \blt u^n)_j^2+2( V\tilde{D}_x\blt u^n)_j( V \tilde{D}_x \blt u^{n+1})_j+ \eta (\blh u^n)_j^2 \blh u_j^{n+1} \Big).
	\end{split}
\end{equation}
Using the ROM for the KdV equation defined in \eqref{eqn:KdV-LIGEP-ROM}, we can recover the OpInf model from the following optimization problem:
\begin{equation}\label{eqn:KdV-OpInf-opt}
	\min_{\tilde{D}_x=-\tilde{D}_x^T}\left\|\delta_t\blt U^n+\frac{\eta}{2}\tilde{D}_x V^\top (\bl U^n \circ\bl U^n)+\gamma^2\mu_t\tilde{D}_x^3\blt U^n\right\|_F,
\end{equation}
where $\bl U^n=\left[ \bl u(t_1),\ldots,\bl u(t_{N_t})\right]\in \mathbb{R}^{N\times N_t}$ and $\blt{U}^n=V^T\bl{U} \in \mathbb{R}^{r\times N_t}$.
To test the proposed method, we set the constants to $\gamma=0.022$, $\nu=1$ and the spatial domain to $\Omega=[0,P]$ with $P=2$ as in \cite{eidnes2020}. Moreover, we set the spatial step-size to $\Delta x=0.004$ and temporal step-size to $\Delta t=0.1$. The boundary condition is set to periodic and the initial condition to  $u(x,0)=0.4\sech( x-P/2))$.
To construct the reduced coefficient matrix $\blt{U}$, we simulated the full model \eqref{eqn:KdV_LIGEP} up to the final time $T=50$. We used the reduced coefficients up to time $T=15$ to construct the coefficient matrix for training. We have set the reduced dimension to $r=16$ to construct the OpInf model.

In \Cref{fig:kdv-OpInf-ener} we have plotted the polarised discrete energy \eqref{eqn:KdV-fom-pol-ener} preserved by the full model \eqref{eqn:KdV_LIGEP} and the relative error \eqref{eqn:rel_ener_err} between the full and reduced polarised discrete energy \eqref{eqn:KdV-rom-pol-ener}. The figure shows that both, the full and the OpInf model preserve the polarised energy.

\begin{figure}[h]
	\centering
	\begin{subfigure}{0.328\textwidth}
	\includegraphics[width=1\linewidth]{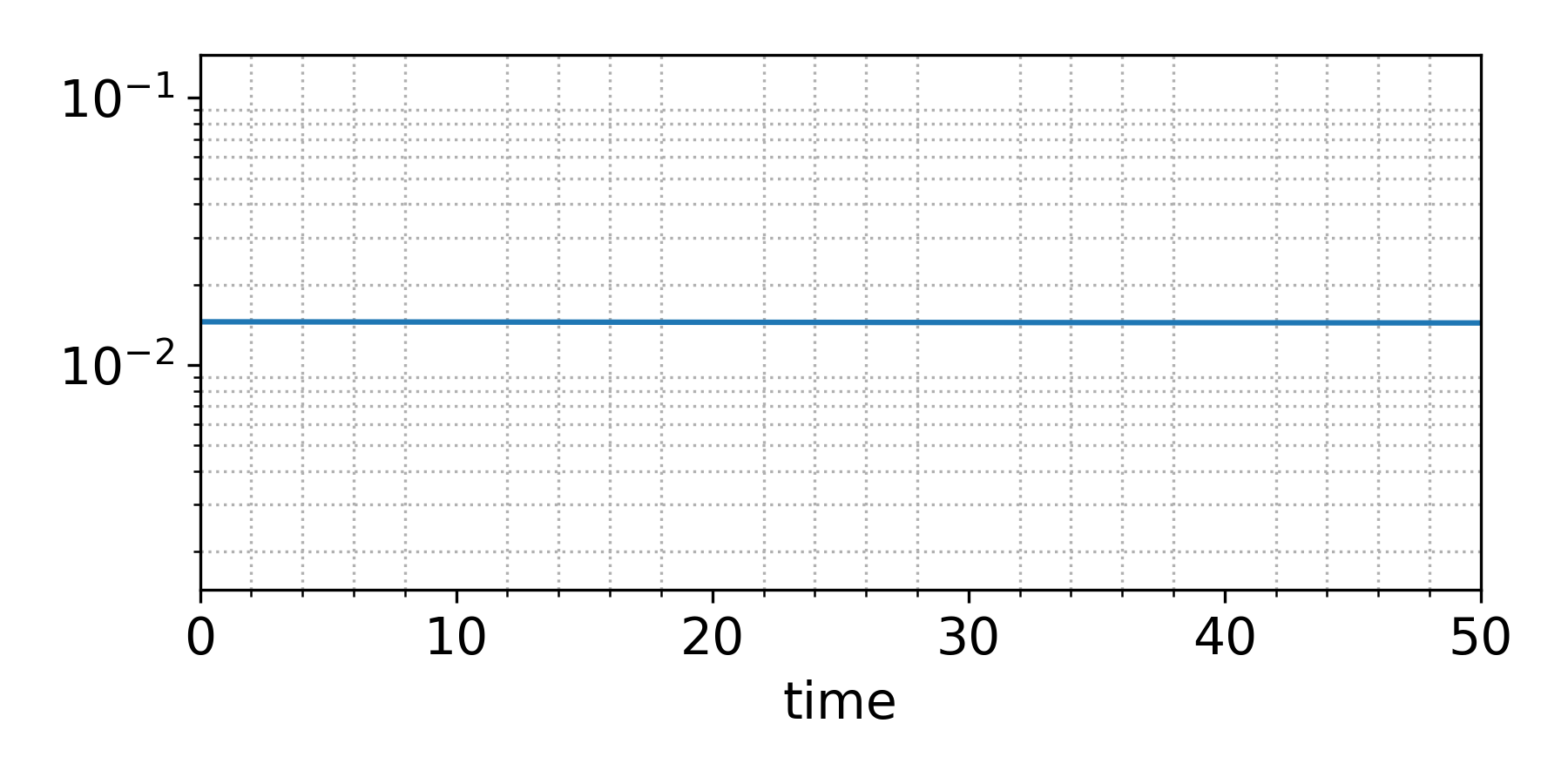}
	\caption{Ground truth energy}	
	\end{subfigure}
	\begin{subfigure}{0.328\textwidth}
	\includegraphics[width=1\linewidth]{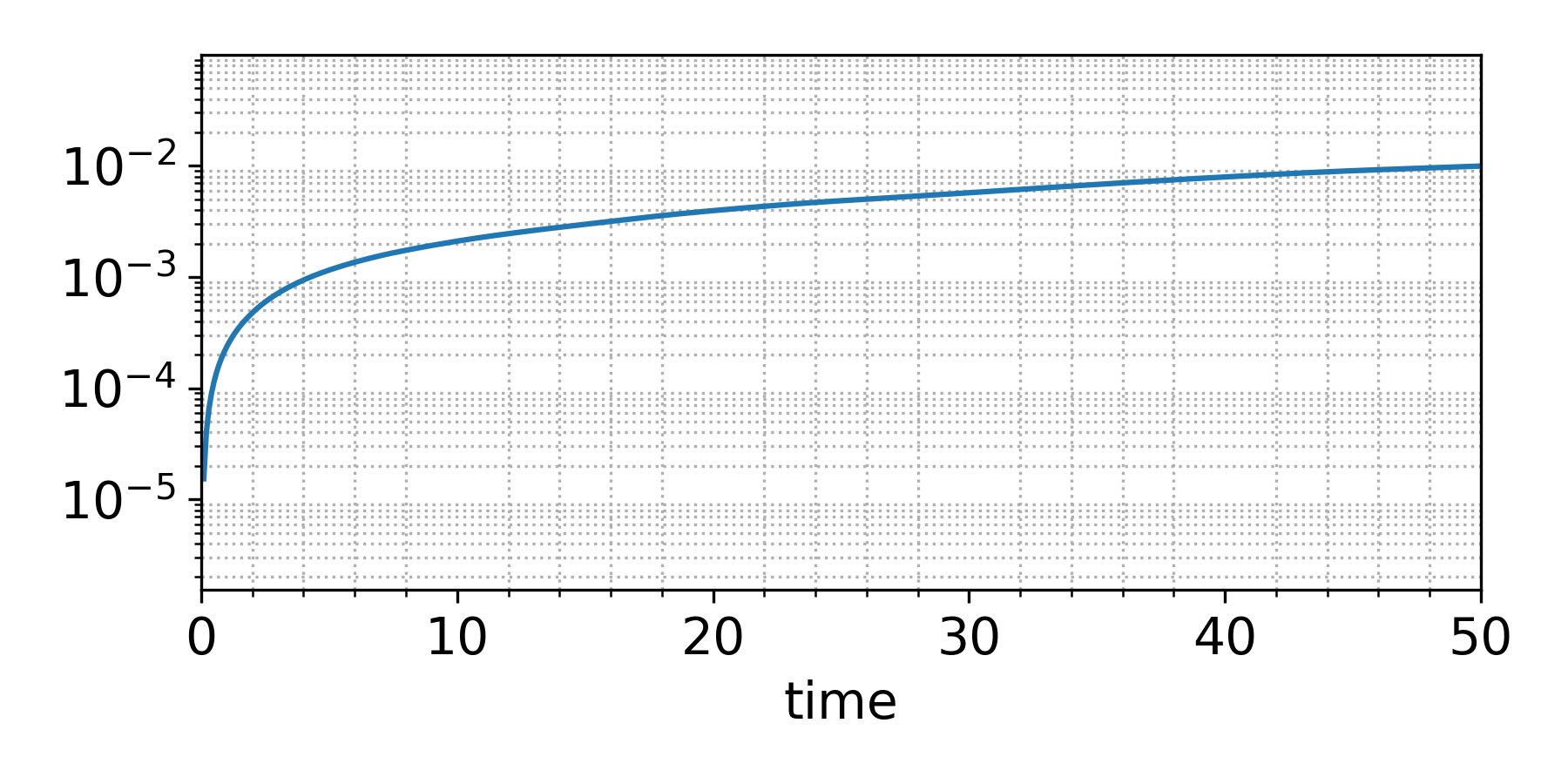} 
	\caption{Relative error}
	\end{subfigure}
	\caption{KdV: discrete energy comparison between full and reduced-order models.}
	\label{fig:kdv-OpInf-ener}
\end{figure}

Again, we compare the first three reduced coefficients $\tilde{u}_i(t)$ of the ground truth model with the coefficients obtained by the proposed OpInf method $\breve{u}_i(t)$ in \Cref{fig:kdv-rcom}. As in the previous example, we have plotted a vertical line to separate the training and testing interval to test the generality of the proposed model. The absolute errors shown in \Cref{fig:kdv-rcom} indicate that the absolute errors do not grow rapidly outside the training interval, indicating that the proposed OpInf model captures the reduced dynamics well. 

\begin{figure}[h]
	\centering
	\begin{subfigure}{0.328\textwidth}
		\includegraphics[width=1\linewidth]{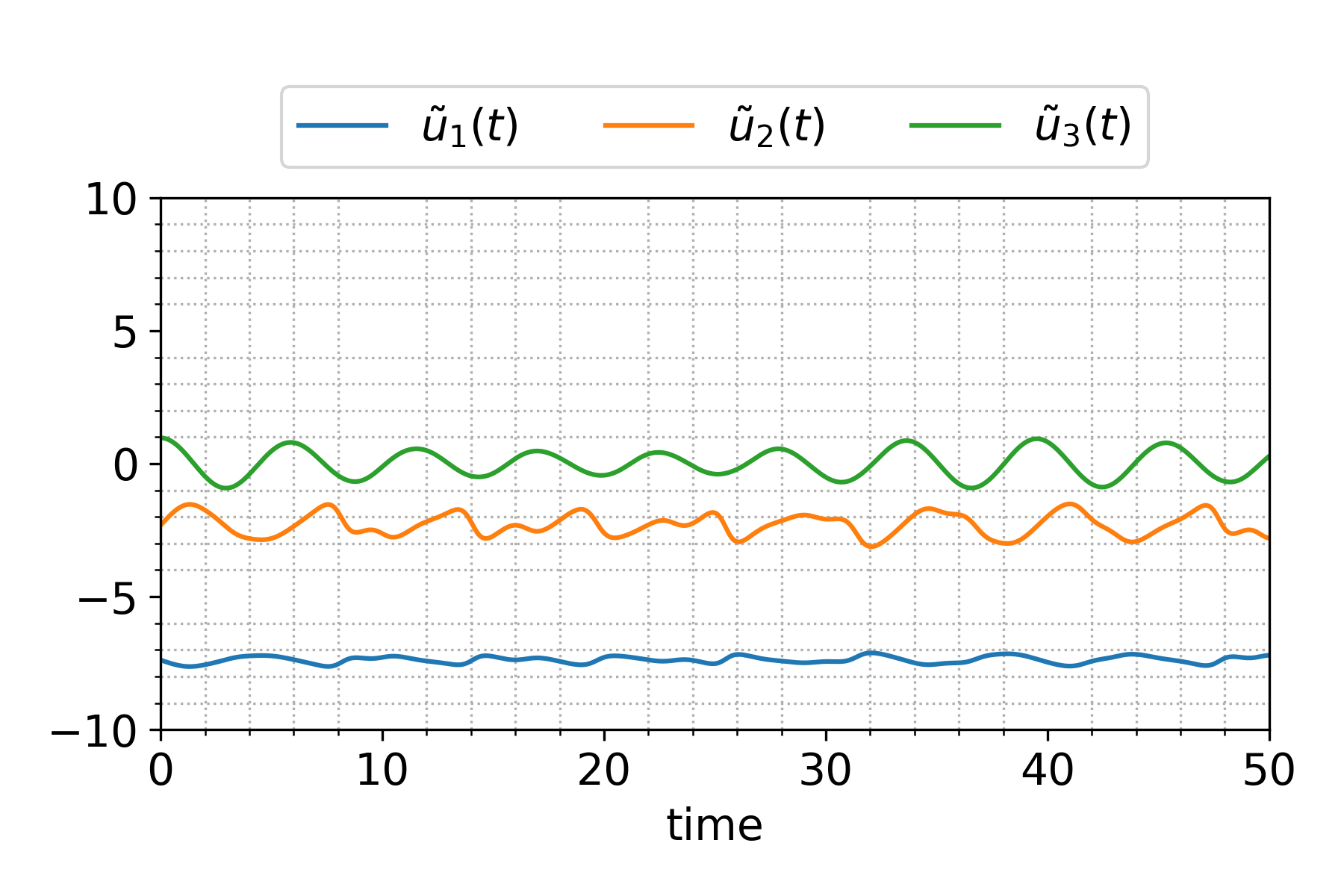}
		\caption{Ground truth}	
	\end{subfigure}
	\begin{subfigure}{0.328\textwidth}
		\includegraphics[width=1\linewidth]{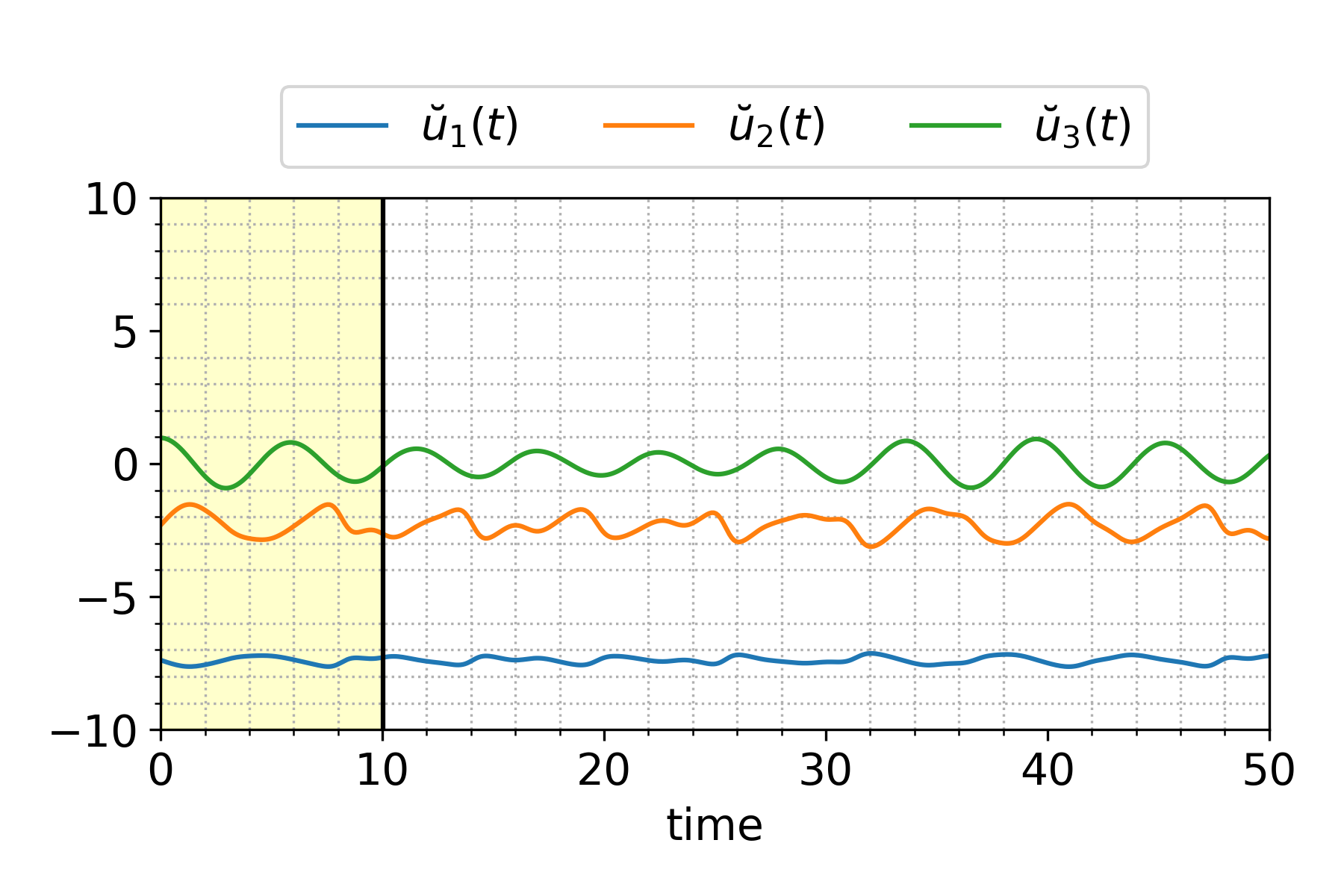} 
		\caption{OpInf}
	\end{subfigure}
	\begin{subfigure}{0.328\textwidth}
		\includegraphics[width=1\linewidth]{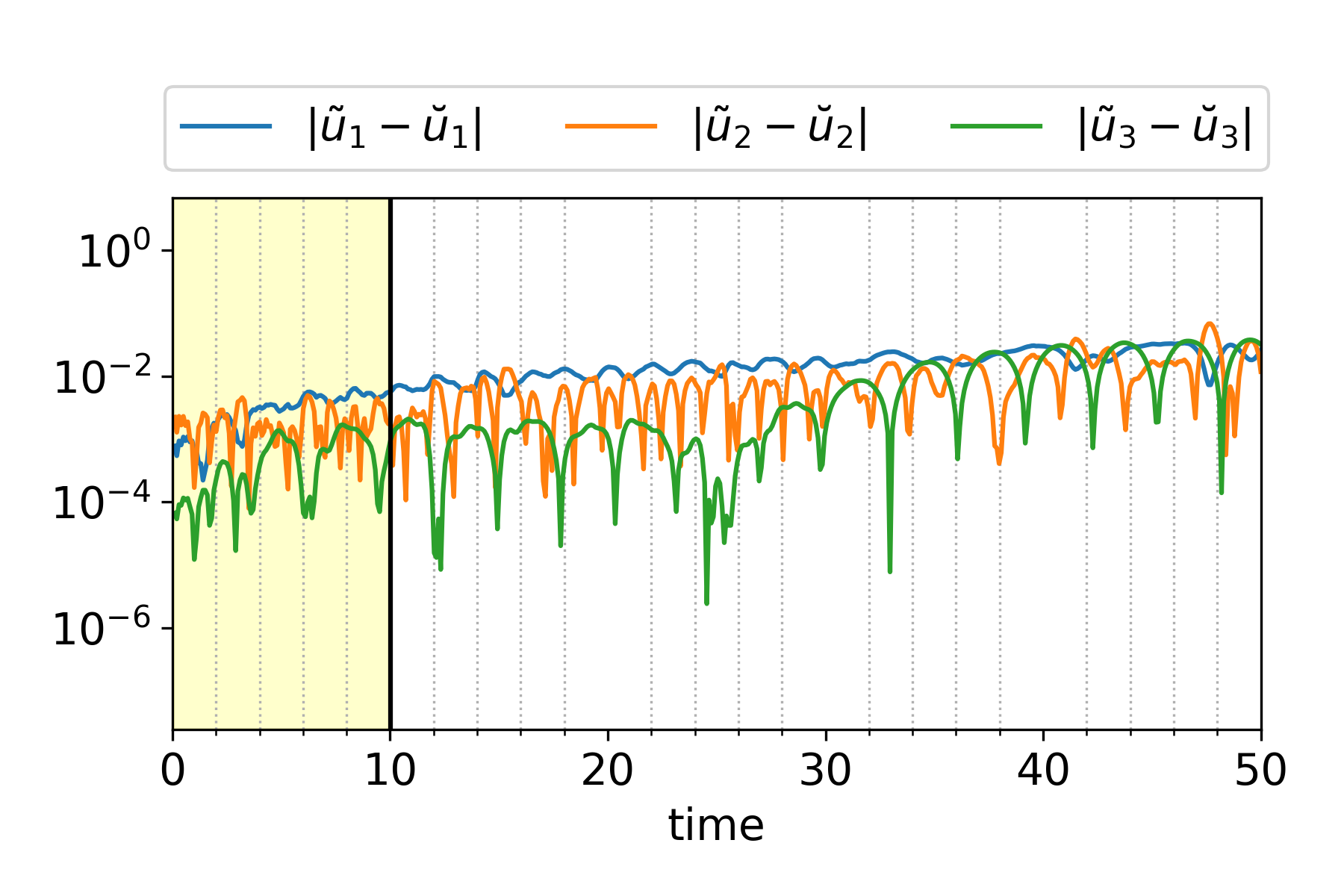}
		\caption{Absolute error}
	\end{subfigure}
	\caption{KdV:  a comparison of the reduced coefficients obtained by the full and the reduced-order models.  }
	\label{fig:kdv-rcom}
\end{figure}

We also compare the accuracy of the proposed OpInf model in the full dimension in \Cref{fig:kdv-fcom}. The figure reveals that the absolute error between the full model \eqref{eqn:KdV_LIGEP} and the learned OpInf is in good agreement, as in the previous example. 
 
\begin{figure}[h]
	\centering
	\begin{subfigure}{0.328\textwidth}
		\includegraphics[width=1\linewidth]{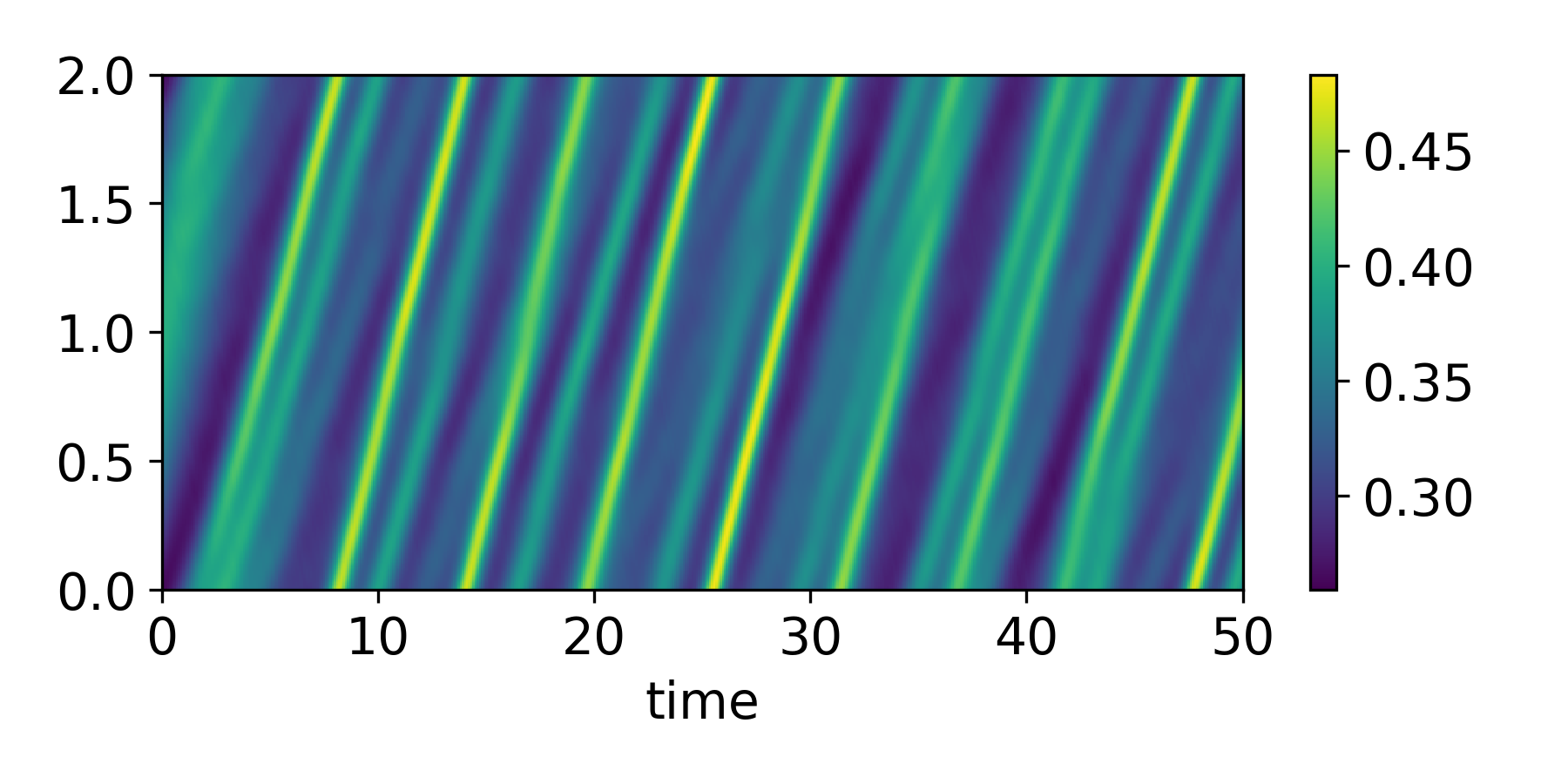}
		\caption{Ground truth}	
	\end{subfigure}
	\begin{subfigure}{0.328\textwidth}
		\includegraphics[width=1\linewidth]{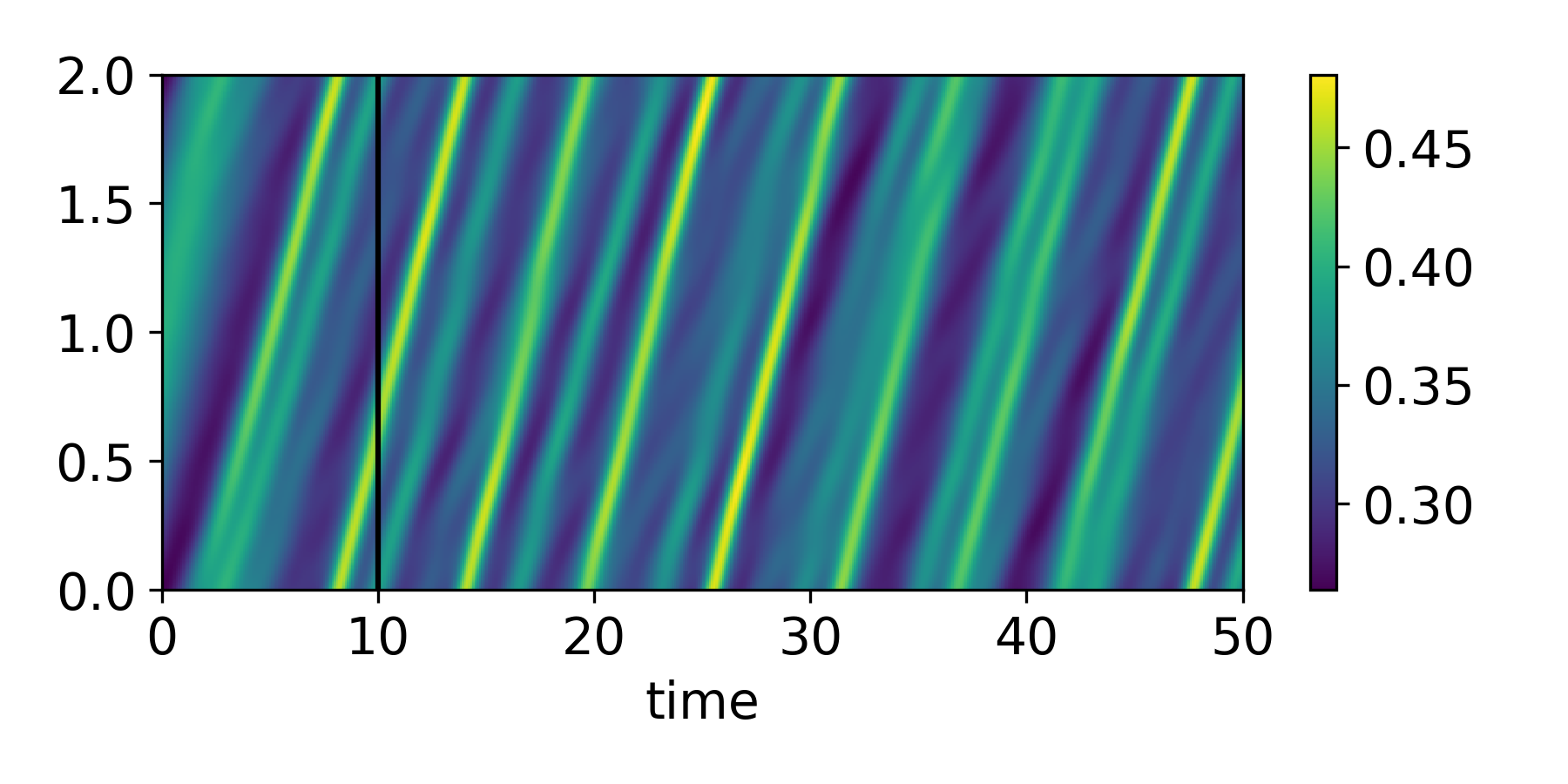} 
		\caption{OpInf}
	\end{subfigure}
	\begin{subfigure}{0.328\textwidth}
		\includegraphics[width=1\linewidth]{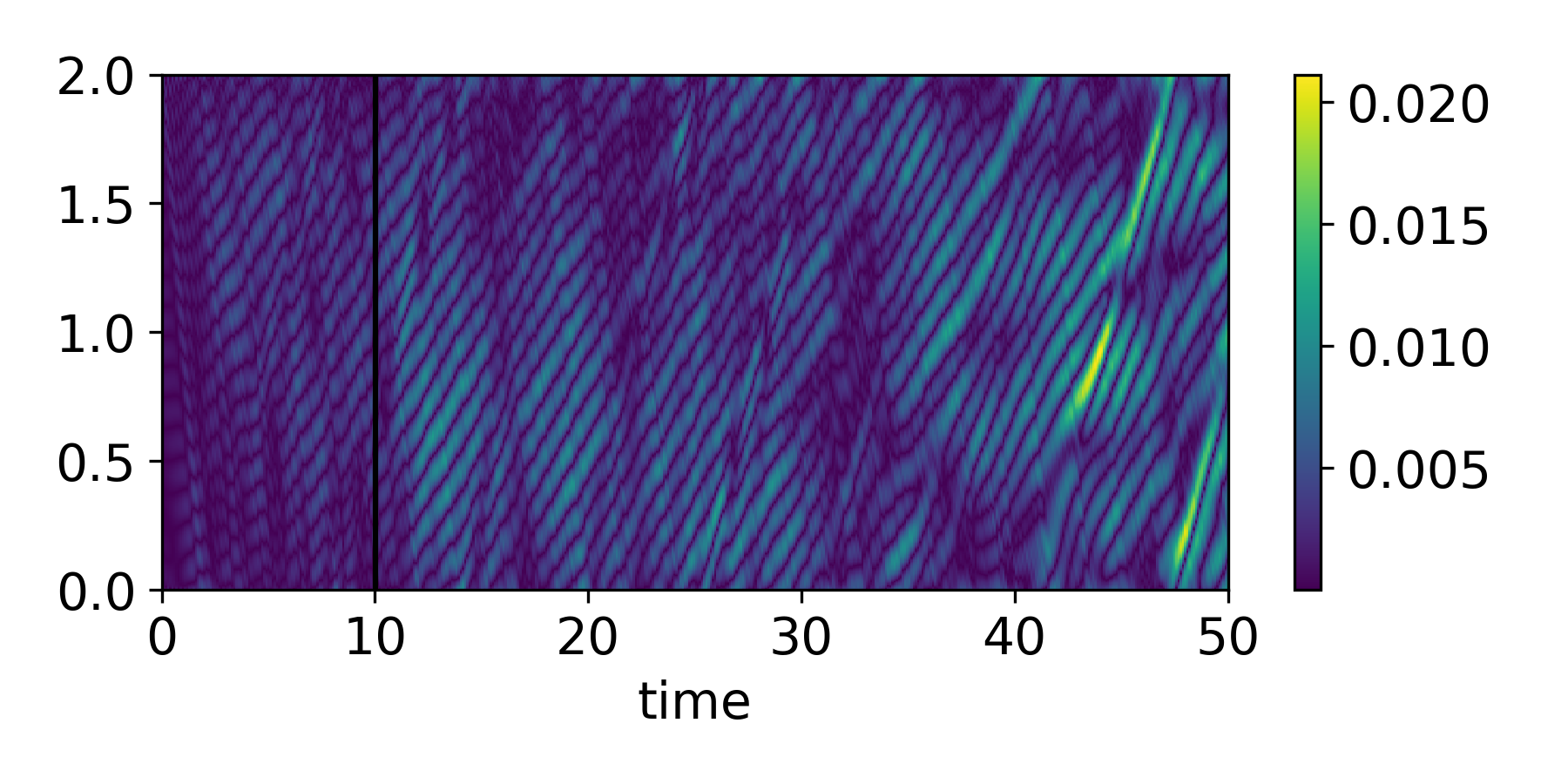}
		\caption{Absolute error}
	\end{subfigure}
	\caption{KdV:   a comparison between the full and reduced-order models in full model dimension.  }
	\label{fig:kdv-fcom}
\end{figure}

\subsection{Zakharov-Kuznetsov equation}\label{subsec:zk}
In our last example, we test the performance of the proposed model on higher-dimensional multi-symplectic PDEs. To do this, we test the performance of the proposed model with the Zakharov-Kuznetsov (ZK) equation. The ZK equation is a non-integrable PDE \cite{nishiyama2012conservative} and can be considered as generalisation of the KdV equation.
The ZK equation introduced in \cite{zakharov1974three}, which has following form 
\begin{equation}
	u_t + u u_x + u_{xxx} + u_{x y y}=0.
	\label{eqn:Zakharov-Kuznetsov}
\end{equation}
The ZK equation can be equivalently represented as a system of first-order PDEs \cite{bridges01} as follows:
\begin{equation}\label{eq:ZKsystem}
	\begin{split}
		\phi_x &= u,\\
		\frac{1}{2}\phi_t + v_x + w_y &= p - \frac{1}{2}u^2,\\
		w_x - v_y &= 0,\\
		-\frac{1}{2}u_t - p_x &= 0,\\
		-u_x + q_y &= -v,\\
		-q_x - u_y &= -w,
	\end{split}
\end{equation}
which reveals the following multi-symplectic formulation
\begin{equation}\label{eqn:ZK_2d_ms}
	K z_t + L^1 z_x + L^2 z_y = \nabla S(z), \quad z \in \mathbb{R}^6, \quad (x,y,t)\in \mathbb{R}^2 \times \mathbb{R},
\end{equation}
with $z = (p,u,q,\phi,v,w)^T$, the Hamiltonian $S(z) = u p - \frac{1}{2}(v^2+w^2) - \frac{1}{6}u^3$, and skew-symmetric matrices defined as follows: 
\begin{equation*}
K=\begin{bmatrix}
     0 & 0 & 0 & 0 & 0 & 0 \\
     0 & 0 & 0 & \frac{1}{2} & 0 & 0 \\
     0 & 0 & 0 & 0 & 0 & 0 \\
     0 & -\frac{1}{2} & 0 & 0 & 0 & 0 \\
     0 & 0 & 0 & 0 & 0 & 0 \\
     0 & 0 & 0 & 0 & 0 & 0
\end{bmatrix},\quad
L^1=\begin{bmatrix}
     0 & 0 & 0 & 1 & 0 & 0 \\
     0 & 0 & 0 & 0 & 1 & 0 \\
     0 & 0 & 0 & 0 & 0 & 1 \\
    -1 & 0 & 0 & 0 & 0 & 0 \\
     0 & -1 & 0 & 0 & 0 & 0 \\
     0 & 0 & -1 & 0 & 0 & 0
\end{bmatrix},\quad
L^2=\begin{bmatrix}
    0 & 0 & 0 & 0 & 0 & 0 \\
    0 & 0 & 0 & 0 & 0 & 1 \\
    0 & 0 & 0 & 0 & -1 & 0 \\
    0 & 0 & 0 & 0 & 0 & 0 \\
    0 & 0 & 1 & 0 & 0 & 0 \\
    0 & -1 & 0 & 0 & 0 & 0
\end{bmatrix}.
\end{equation*}
Following \cite{eidnes2020}, we discretize the multi-symplectic form of the ZK equation \eqref{eqn:ZK_2d_ms} with the linearly implicit global energy preserving method, which yields the following equation
\begin{equation}
	\delta_t u_{j,k}^n + \frac{1}{2} (D_x(u^n u^{n+1}))_{j,k} + \mu_t(D_x^3(u^n))_{j,k} + \mu_t(D_xD_y^2(u^n))_{j,k} =0.
	\label{eq:ZK_LIGEP}
\end{equation}
The above equation is preserving the following discrete polarized energy
\begin{equation}\label{eqn:zk-pol-ener-fom}
		\begin{split}
			\bar{\mathcal{E}}(t_n) =& \frac{1}{6} \Delta x \, \Delta y \sum_{j=1}^{N} \sum_{k=1}^{N} \Big( 2 (D_xu^n)_{j,k} (D_xu^{n+1})_{j,k} + ((D_xu^n)_{j,k})^2 \\
			&+ 2 (D_yu^n)_{j,k} (D_yu^{n+1})_{j,k} + ((D_yu^n)_{j,k})^2 - (u_{j,k}^n)^2 u_{j,k}^{n+1} \Big).
		\end{split}
\end{equation}

In this example, we use partial knowledge of the auxiliary variables to construct the extended snapshot matrix as in \cite{uzunca2023global}. Using the equation $\phi_x = u$, we construct the following extended snapshot matrix:
$$\bl Z=\left[ \bl u(t_1),\ldots,\bl u(t_{N_t}),\boldsymbol{\phi}(t_1),\ldots,\boldsymbol \phi(t_{N_t})\right]\in \mathbb{R}^{N\times 2 N_t}.$$
After obtaining the POD basis, we can write the ROM obtained with LIGEP method \cite{yildiz23} as follows:
\begin{equation}\label{eqn:ZK-LIGEP-ROM}
	\delta_t\blt u^n+\frac{1}{2}\tilde{D}_x V^\top (\blh u^n \circ\blh u^{n+1})+ \mu_t(\tilde{D}_x^3+\tilde{D}_x\tilde{D}_y^2)\blt u^n=0.
\end{equation}
The corresponding global energy preserved by the ROM \eqref{eqn:ZK-LIGEP-ROM} reads as
\begin{equation}\label{eqn:zk-pol-ener-rom}
	\begin{split}
		\bar{\mathcal{E}}_r(t_n) =& \frac{1}{6} \Delta x \, \Delta y \sum_{j=1}^{N} \sum_{j=1}^{N} \Big( 2 (V \tilde{D}_x\blt{u}^n)_{j,k} (V\tilde{D}_x\blt{u}^{n+1})_{j,k} + ((V\tilde{D}_x\blt{u}^n)_{j,k})^2 \\
		&+ 2 (V\tilde{D}_y\blt{u}^n)_{j,k} (V\tilde{D}_y\blt{u}^{n+1})_{j,k} + ((V\tilde{D}_y\blt{u}^n)_{j,k})^2 - (\blh{u}_{j,k}^n)^2 \blh{u}_{j,k}^{n+1} \Big).
	\end{split}
\end{equation}
The optimization problem to recover the reduced operators via OpInf can be written as
\begin{equation}\label{eqn:ZK-OpInf-opt}
	\min_{\tilde{D}_x=-\tilde{D}_x^T, \tilde{D}_y=-\tilde{D}_y^T}\left\|\delta_t\blt U^n+\frac{1}{2}\tilde{D}_x V^\top (\bl U^n \circ\bl U^n)+ \mu_t(\tilde{D}_x^3+\tilde{D}_x\tilde{D}_y^2)\blt U^n\right\|_F,
\end{equation}
where $\bl U^n=\left[ \bl u(t_1),\ldots,\bl u(t_{N_t})\right]\in \mathbb{R}^{N\times N_t}$ and $\blt{U}^n=V^T\bl{U} \in \mathbb{R}^{r\times N_t}$.

To test our proposed method, we consider the collision of double line solitons with the following initial condition \cite{chen2011multi}:
\begin{equation*}
	u(x,y,0)=\sum_{j=1}^{2}3c_j\sech^2\left(0.5\sqrt{\dfrac{c_j}{\varepsilon}}\left((x-x_j)\cos\theta+(y-y_j)\sin\theta\right) \right),
\end{equation*}
where $\varepsilon=0.01$, $\theta=0$, $y_1=y_2=0$, $c_1=0.45$, $c_2=0.25$, $x_1=2.5$ and $x_2=3.3$. To obtain the discrete differential operators $D_x$ and $D_y$ we used the central difference method. We consider a square spatial domain $\Omega=[0,8]\times[0,8]$ with periodic boundary conditions. We discretized the spatial domain by dividing each spatial direction into $N=50$ sub-intervals, resulting in spatial step sizes $\Delta x=\Delta y=0.16$ and full model dimension $N^2=2500$. The full model \eqref{eq:ZK_LIGEP} was simulated up to the final time $T=50$ with the temporal step-size $\Delta t=0.025$. We have constructed the reduced coefficients matrix $\blt U$ with the snapshots obtained up to time $T=25$. We have chosen to reduce the dimension to $r=64$ to test the proposed method.

In \Cref{fig:zk-OpInf-ener} we show the discrete polarised energy \eqref{eqn:zk-pol-ener-fom} obtained from the full model \eqref{eq:ZK_LIGEP} and the relative energy error between the full and reduced-order models, which shows that the model resulting from the OpInf captures the global energy qualitatively well. 

As in the previous examples, we compare our method with the ground truth model obtained by \eqref{eq:ZK_LIGEP} in full and reduced dimensions in \Cref{fig:zk-fcom} and \Cref{fig:zk-rcom}, respectively. In \Cref{fig:zk-rcom} we have used a vertical line to highlight the training and test intervals as in the previous examples. \Cref{fig:zk-fcom,fig:zk-rcom} shows that the proposed ROM is also suitable for higher dimensional problems. 

\begin{figure}[h]
	\centering
	\begin{subfigure}{0.328\textwidth}
		\includegraphics[width=1\linewidth]{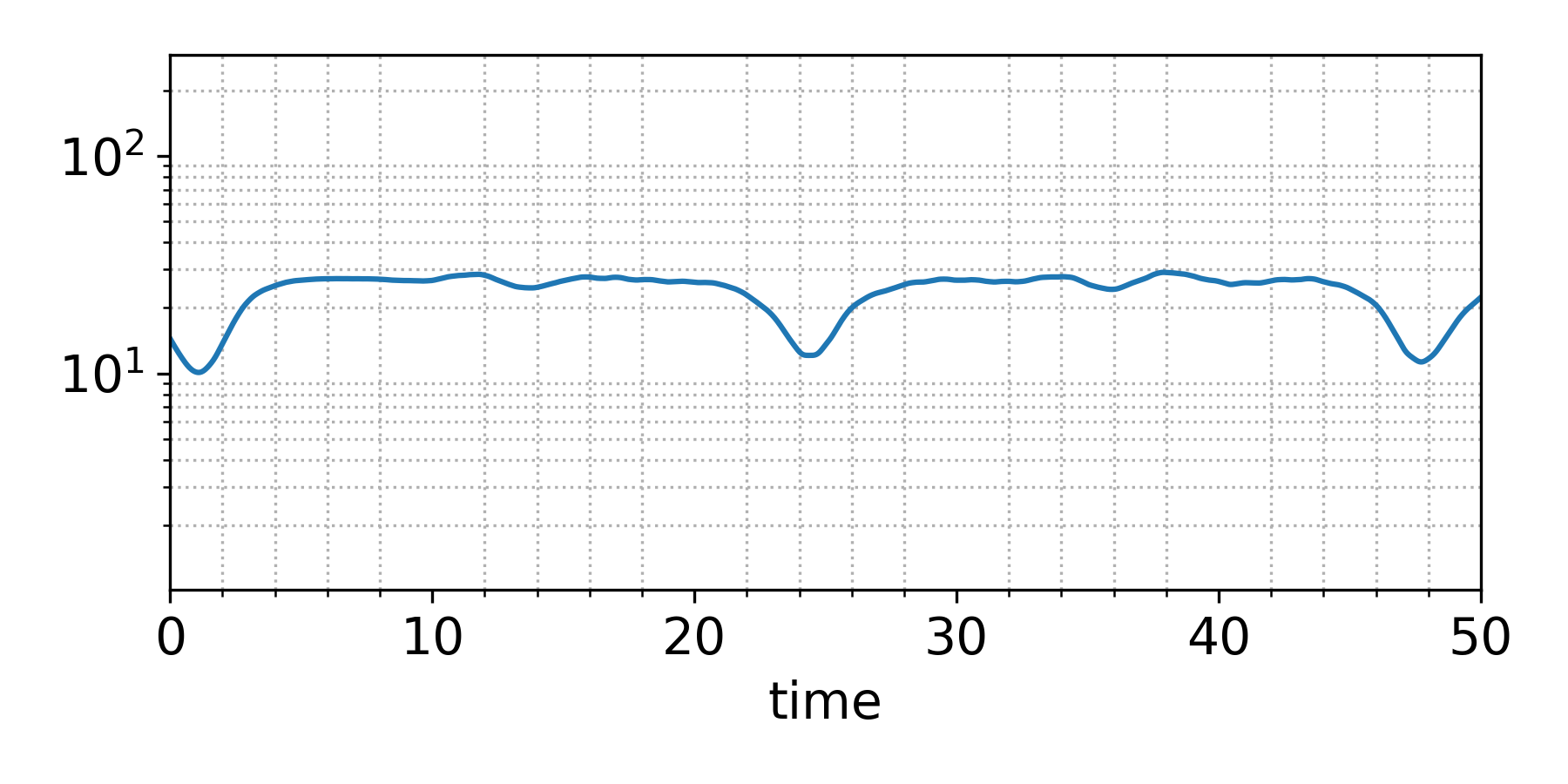}
		\caption{Ground truth energy}	
	\end{subfigure}
	\begin{subfigure}{0.328\textwidth}
		\includegraphics[width=1\linewidth]{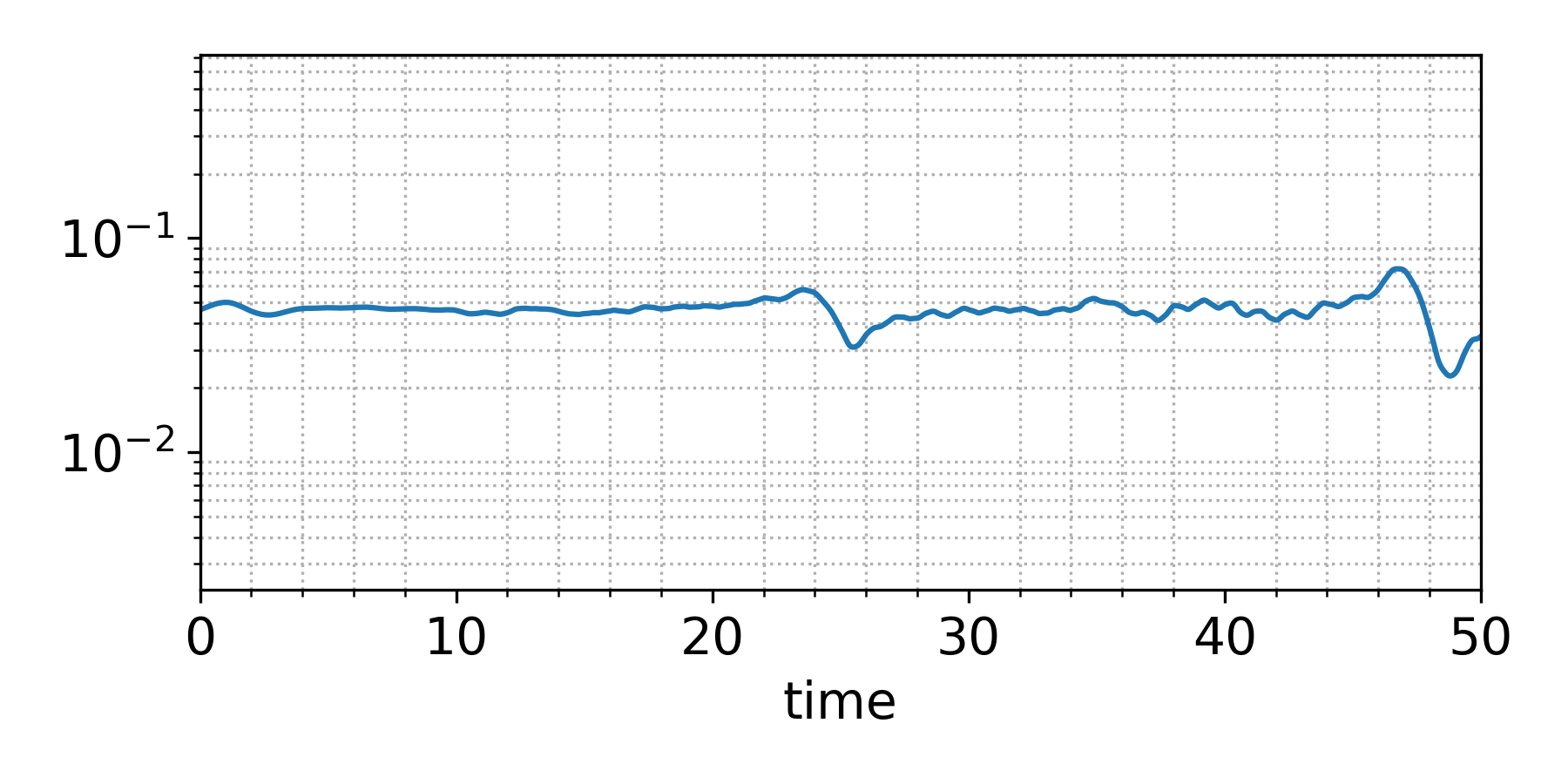} 
		\caption{Relative error}
	\end{subfigure}
	\caption{ZK: discrete energy comparison between full and reduced-order models.}
	\label{fig:zk-OpInf-ener}
\end{figure}

\begin{figure}[h]
	\centering
	\begin{subfigure}{0.328\textwidth}
		\includegraphics[width=1\linewidth]{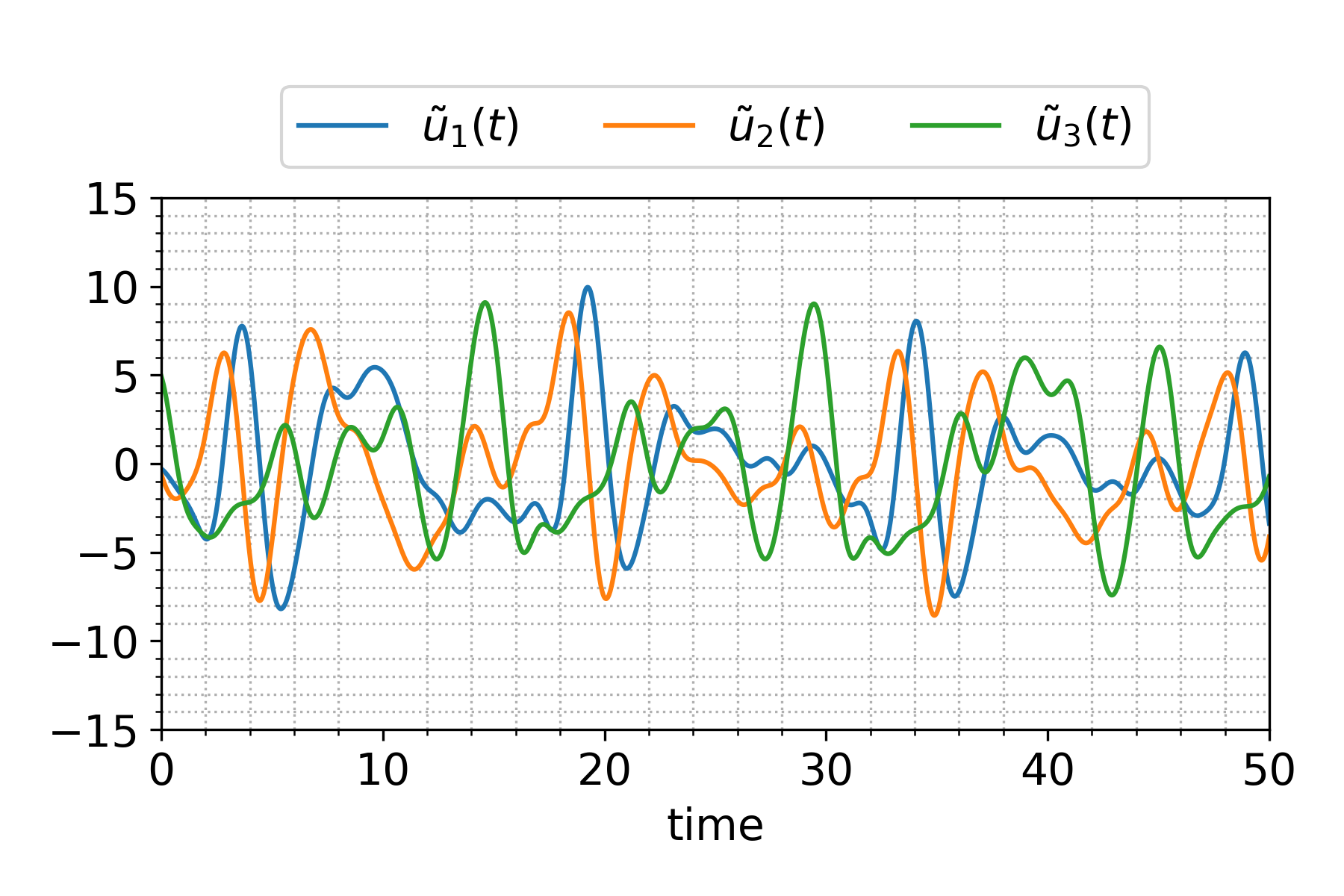}
		\caption{Ground truth}	
	\end{subfigure}
	\begin{subfigure}{0.328\textwidth}
		\includegraphics[width=1\linewidth]{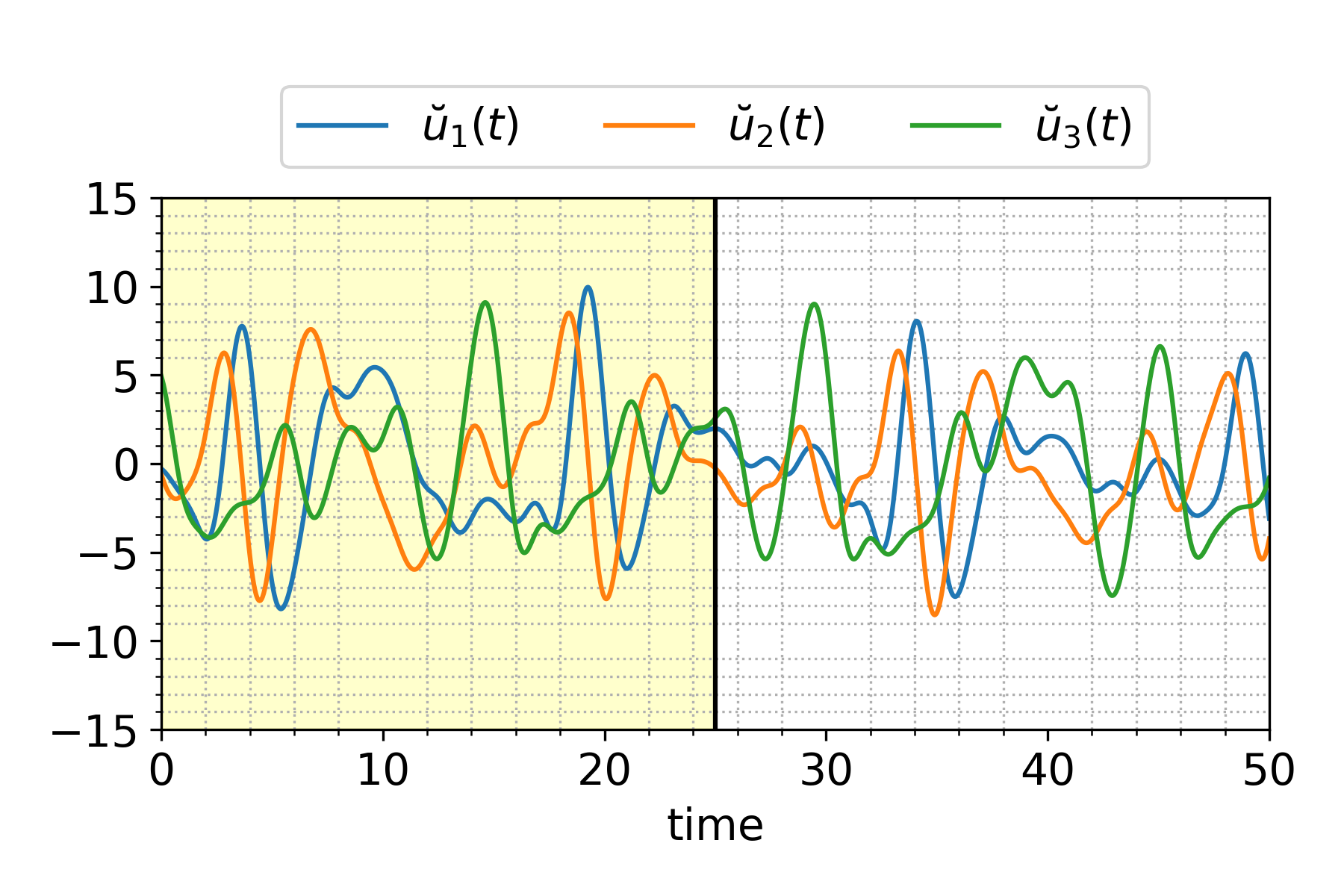} 
		\caption{OpInf}
	\end{subfigure}
	\begin{subfigure}{0.328\textwidth}
		\includegraphics[width=1\linewidth]{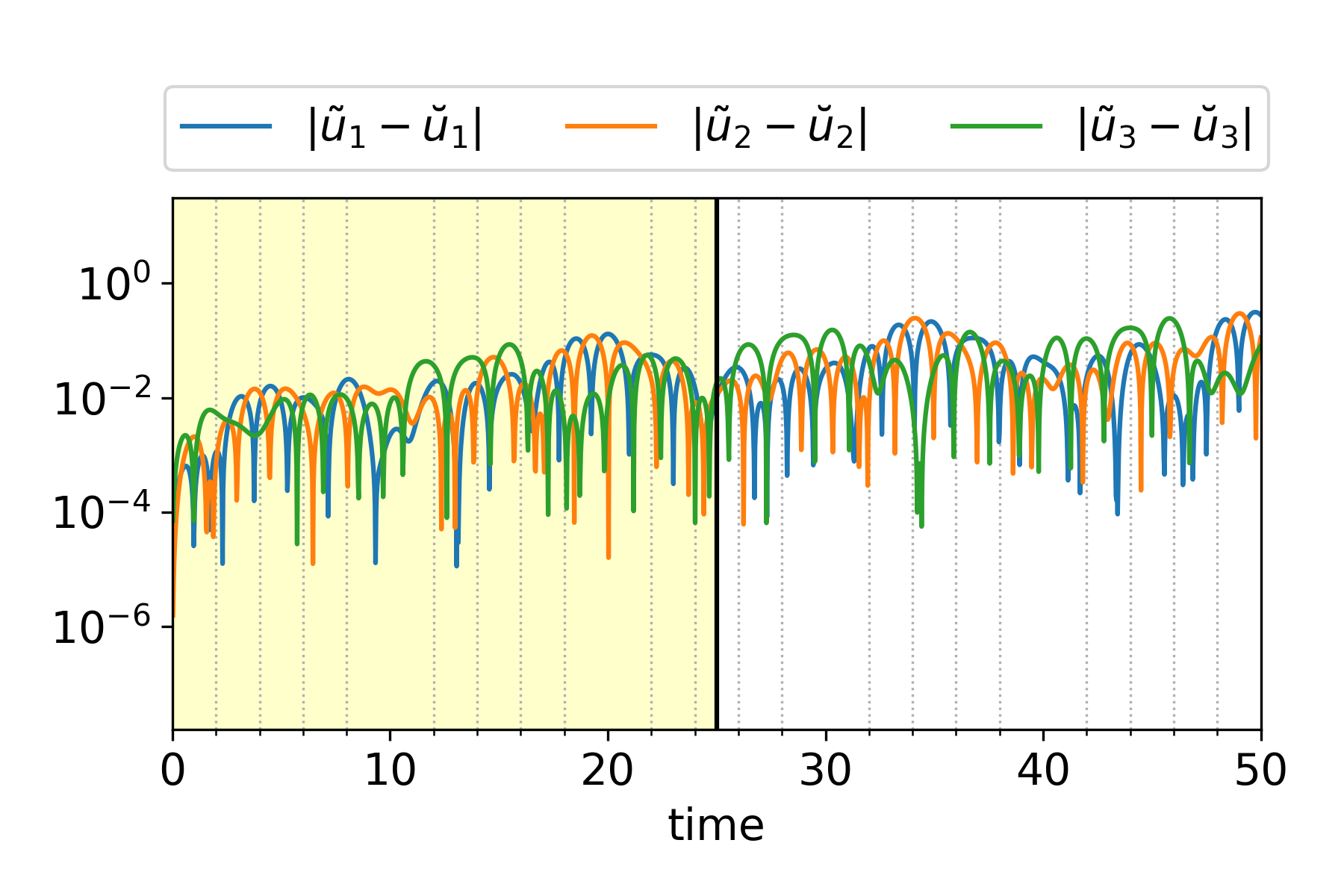}
		\caption{Absolute error}
	\end{subfigure}
	\caption{ZK:  a comparison of the reduced coefficients obtained by the full and the reduced-order models.  }

	\label{fig:zk-rcom}
\end{figure}

\begin{figure}[h]
	\centering
	\begin{subfigure}{0.45\textwidth}
		\includegraphics[width=1\linewidth]{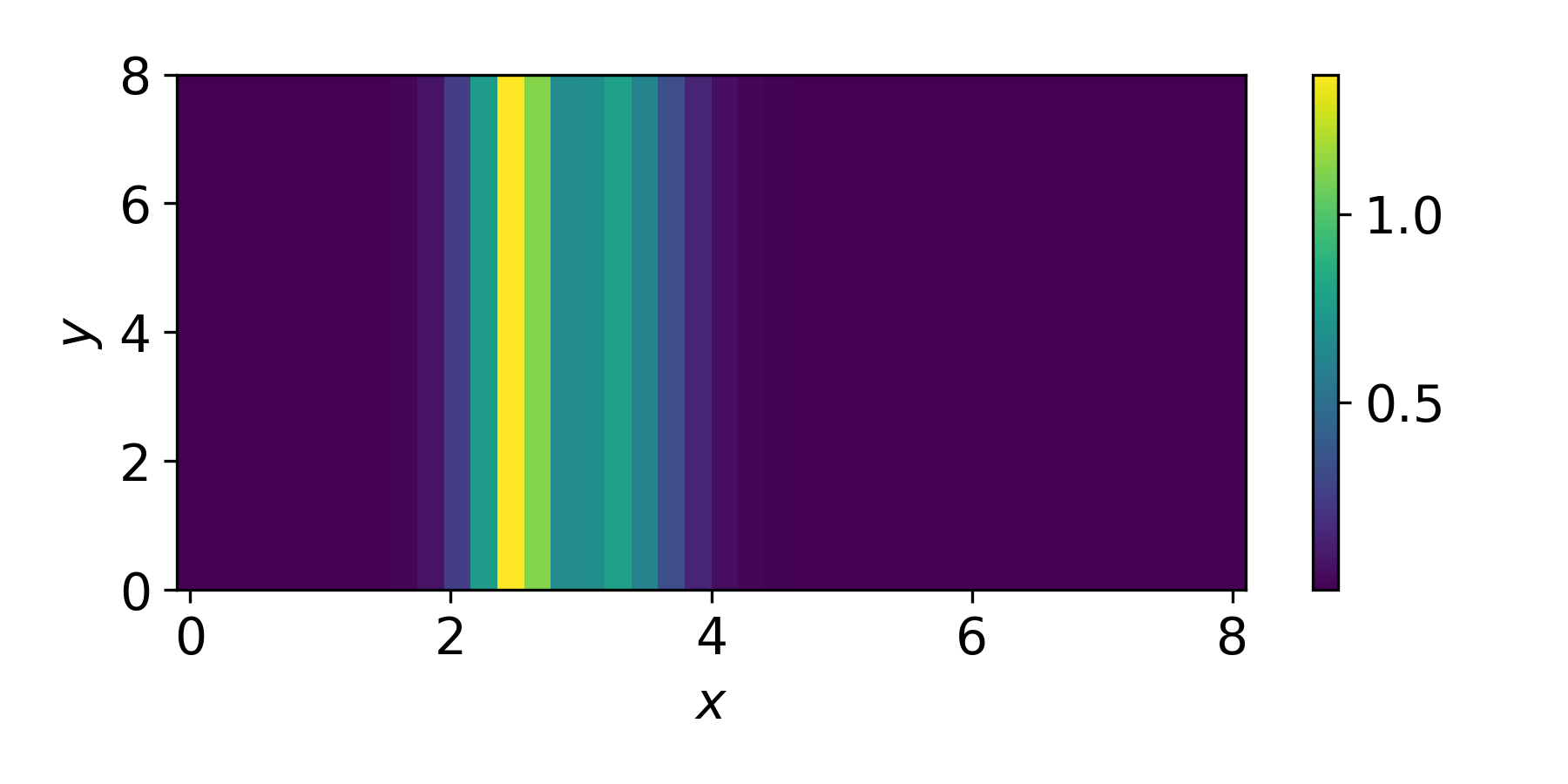}
		\caption{Initial state}	
	\end{subfigure}
	\begin{subfigure}{0.45\textwidth}
		\includegraphics[width=1\linewidth]{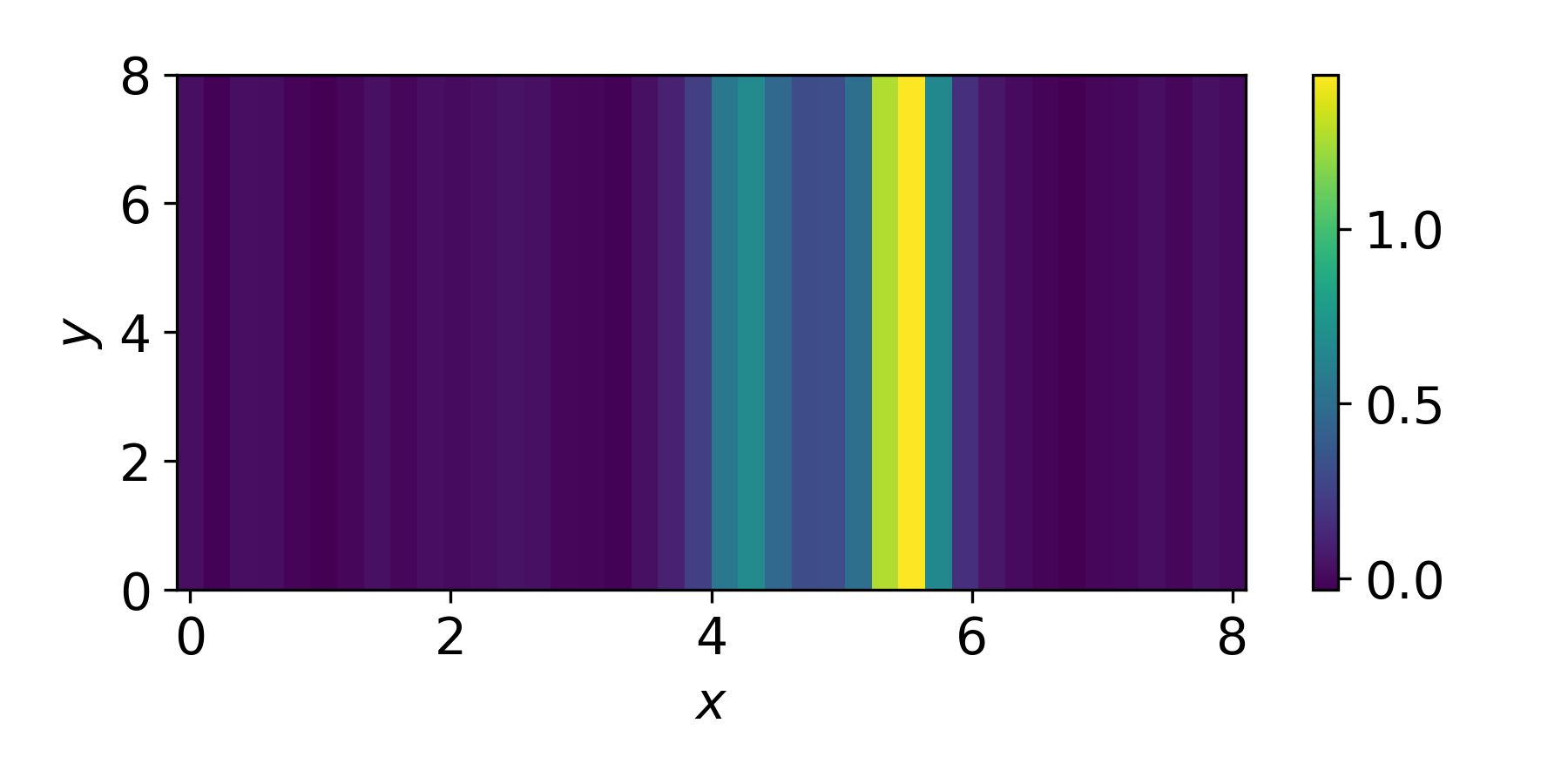}
		\caption{Final state}	
	\end{subfigure}
	\begin{subfigure}{0.45\textwidth}
		\includegraphics[width=1\linewidth]{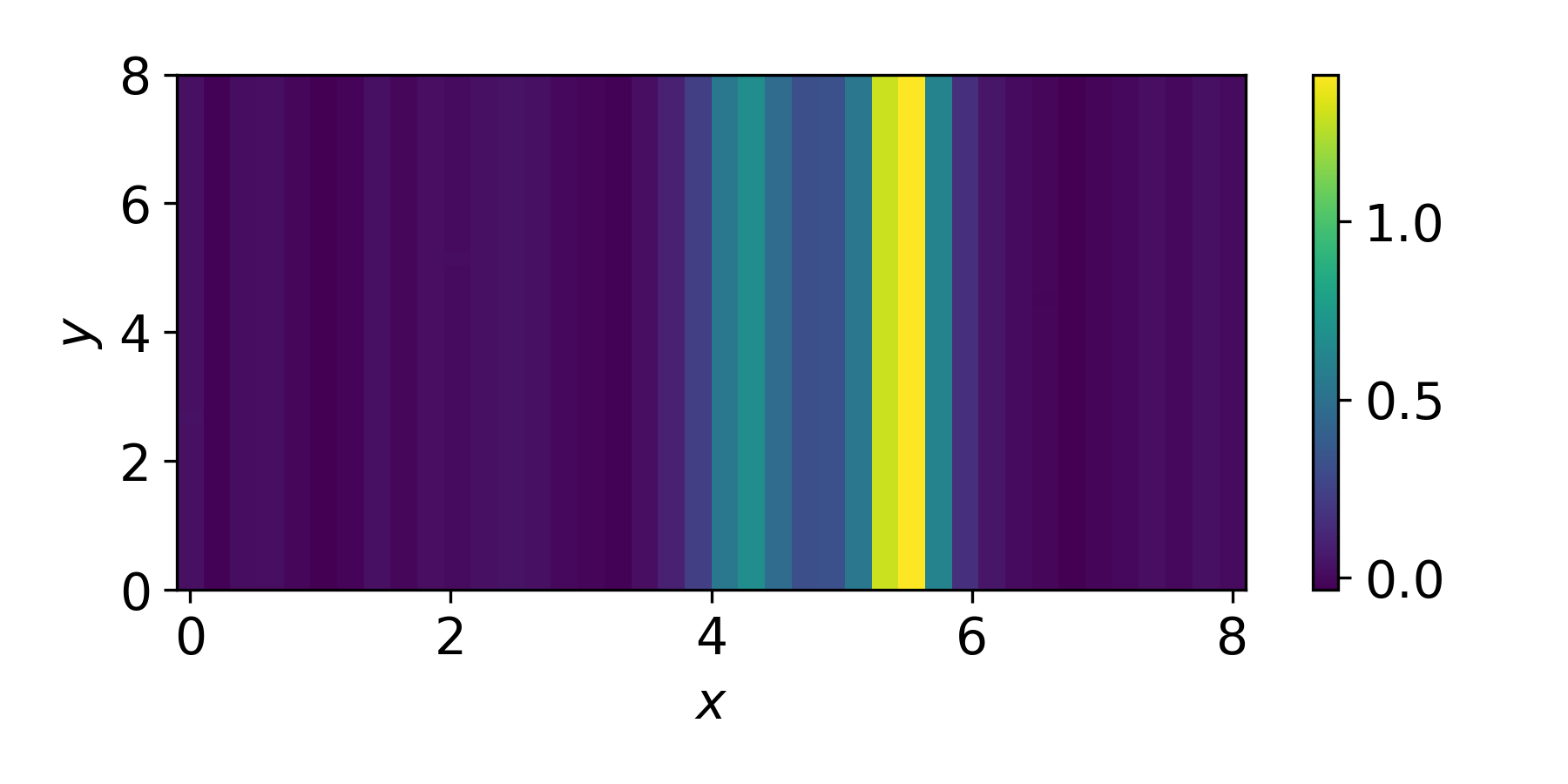} 
		\caption{OpInf final state}
	\end{subfigure}
	\begin{subfigure}{0.45\textwidth}
		\includegraphics[width=1\linewidth]{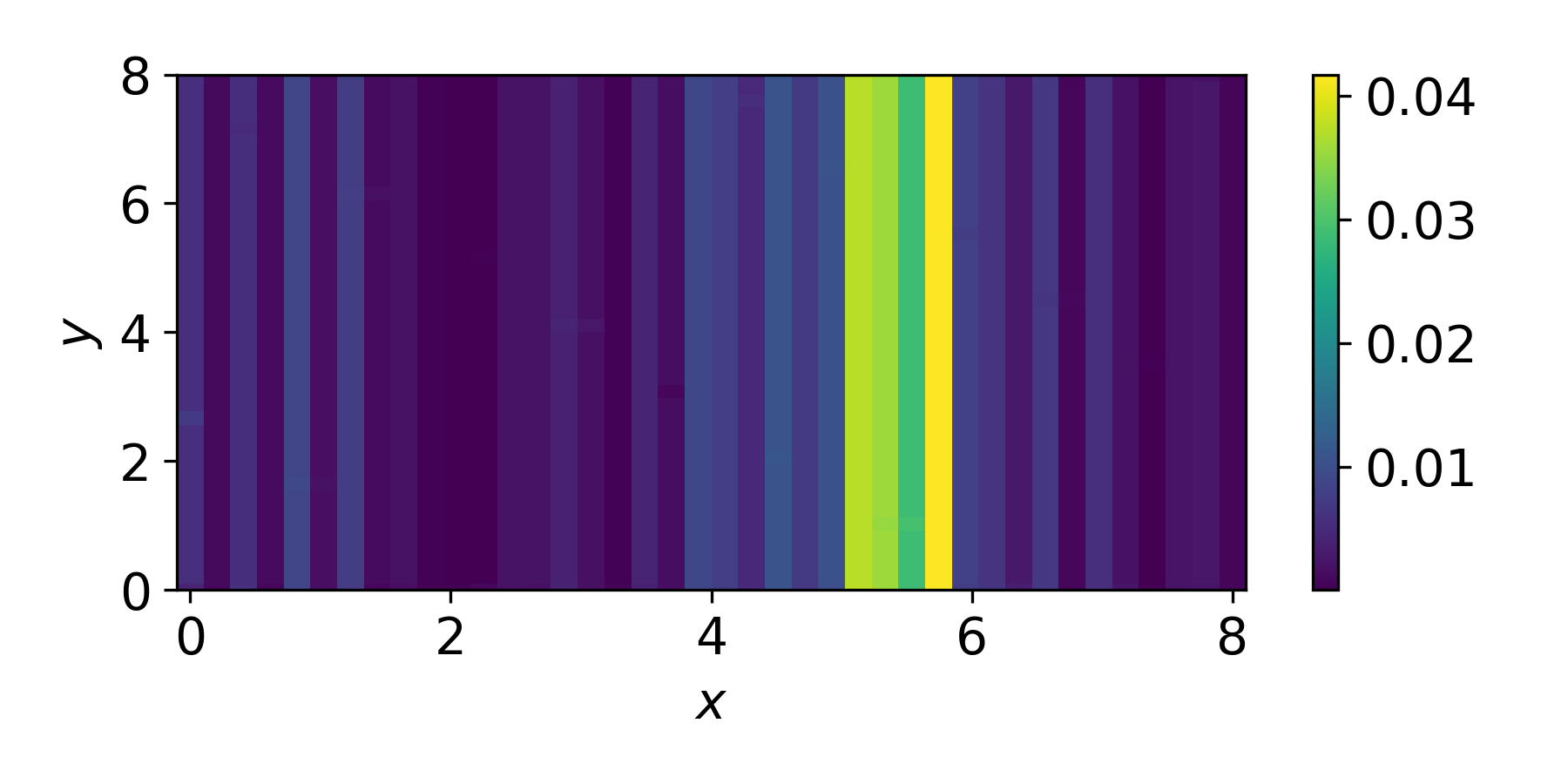}
		\caption{Absolute error}
	\end{subfigure}
	\caption{ZK:   a comparison between the full and reduced-order models in full model dimension at the final time $T=50$.  }
	\label{fig:zk-fcom}
\end{figure}

 \section{Conclusions}\label{sec:conc}

In this paper, we have proposed an operator inference (OpInf) framework by exploiting the idea of global energy preserving PDEs given in multi-symplectic formulations. We have proved that the semi-discrete models obtained by the proposed method preserve the multi-symplectic conservation law and the spatially discrete local energy conservation law. We have demonstrated the efficiency and generality of the proposed method with several numerical examples. We have tested the generality of our method by testing it outside the training interval, which shows that the proposed method leads to robust models outside the training interval. In our future work, we would like to investigate nonlinear transformations by using nonlinear auto-encoders to construct non-intrusive reduced-order models that inherit the multi-symplectic formulations. 

\section*{Acknowledgment}

\section*{Funding Statement}
This work was supported by the German Research Foundation (DFG) Research Training Group 2297 ``MathCoRe'', Magdeburg. 

\section*{Data Availability}

The codes will be available upon publication.

\addcontentsline{toc}{section}{References}
\bibliographystyle{abbrv}
\bibliography{ref}

\end{document}